\renewcommand{\eqref}[1]{(\ref{#1})}
\newcommand{\argmin}{\arg \min}
\crefname{equation}{}{}
\crefname{figure}{Fig.}{Figs.}
\crefname{table}{Table}{Tables}
\crefname{section}{Section}{Sections}
\crefname{prop}{Proposition}{Propositions}
\crefname{theorem}{Theorem}{Theorems}
\crefname{lemma}{Lemma}{Lemmas}
\newtheorem{theorem}{Theorem}[section]
\newtheorem{prop}{Proposition}[section]
\newtheorem{definition}{Definition}[section]
\newcommand{\tr}[1]{\textbf{Tr}{(#1)}}
\crefname{figure}{Fig.}{Figs.}
\crefname{table}{Table}{Tables}
\crefname{section}{Section}{Sections}
\crefname{lemma}{Lemma}{Lemmas}
\crefname{definition}{Definition}{Definitions}
\crefname{theorem}{Theorem}{Theorems}
\crefname{prop}{Proposition}{Propositions}
\theoremstyle{plain}
\theoremstyle{definition}
\newtheorem{remark}{Remark}
\newcommand{\A}{KW}
\newcommand{\B}{D_KW}
\newcommand{\C}{W^TKW}
\newcommand{\Ab}{KG}
\newcommand{\Bb}{D_KG}
\newcommand{\Cb}{K}
\newcommand{\Db}{G^TG}
\begin{document}

\title{ Nonnegative Matrix Factorization with Local Similarity Learning }
%


\author{Chong~Peng,
        Chenglizhao~Chen, Zhao~Kang, 
        and~Qiang~Cheng,~\IEEEmembership{Senior~Member,~IEEE}
\thanks{C. Peng and C. Chen are with the College of Computer Science and Technology, Qingdao University, Qingdao, Shandong, 266000, China; Z. Kang is with School of Computer Science and Engineering, University of Electronic Science and Technology
of China, Chengdu, Sichuan 611731, China; Q. Cheng is with Institute of Biomedical Informatics \& Department of Computer Science, University of Kentucky, Lexington, KY 40536, USA. E-mail: (pchong1991@163.com, cclz123@163.com, sckangz@gmail.com, qiang.cheng@uky.edu).}
        
}

%
%


\markboth{Submitted to xxxx,~Vol.~00, No.~00, December~2018}%
{Chong Peng \MakeLowercase{\textit{et al.}}: Bare Demo of IEEEtran.cls for IEEE Journals}

%



\maketitle

\begin{abstract}
Existing nonnegative matrix factorization methods focus on learning global structure of the data to construct basis and coefficient matrices, which ignores the local structure that commonly exists among data. In this paper, we propose a new type of nonnegative matrix factorization method, which learns local similarity and clustering in a mutually enhancing way. The learned new representation is more representative in that it better reveals inherent geometric property of the data. Nonlinear expansion is given and efficient multiplicative updates are developed with theoretical convergence guarantees. Extensive experimental results have confirmed the effectiveness of the proposed model.
\end{abstract}

\begin{IEEEkeywords}
Nonnegative matrix factorization, clustering, orthonormal constraint, local similarity, convergence
\end{IEEEkeywords}

%
\IEEEpeerreviewmaketitle

\section{Introduction}
High-dimensional data are ubiquitous in the learning community and it has become increasingly challenging to learn from such data \cite{duda2012pattern}. For example, as one of the most important tasks in, for example, multimedia and data mining, information retrieval has drawn considerable attentions in recent years \cite{zhu2018low,Kang2015Learning,Zhen2015Spectral}, where there is often a need to handle high-dimensional data. Often times, it is desirable and demanding to seek a data representaiton to reveal latent data structures of high-dimensional data, which is usually helpful for further data processing. It is thus a critical problem to find a suitable representation of the data \cite{cai2008non,lee1999learning,li2006relationships,tao2007general} in many learning tasks, such as single image super-resolution \cite{Zhu2014Fast}, image reconstruction \cite{Ogawa2011Missing}, image clustering \cite{Peng2017Subspace}, foreground-background seperation in surveillance video \cite{candes2011robust}, matrix completion \cite{nie2012low}, etc. To this end, a number of methods for finding proper representations have been developed, among which matrix factorization technique has been widely used to handle high-dimensional data. Matrix factorization seeks two or more low-dimensional matrices to approximate the original data such that the high-dimensional data can be represented with reduced dimensions \cite{liu2013robust,peng2015subspace}. 

For some types of data, such as images and documents that are widely used in real world learning problems, the entries are naturally nonnegative. For such data, nonnegative matrix factorization (NMF) was proposed to seek two nonnegative factor matrices for approximation. 
In fact, the way of seeking nonnegative factorization for nonnegative data naturally leads to learning parts-based representations of the data \cite{lee1999learning}. Parts-based representation is believed to commonly exist in human brain with psychological and physiological evidence \cite{palmer1977hierarchical,wachsmuth1994recognition,logothetis1996visual}. It overcomes the drawback of latent semantic indexing (LSI) \cite{deerwester1990indexing}, for which the interpretation of basis vectors is difficult due to mixed signs. When the number of basis vectors is large, NMF has been proven to be NP-hard \cite{vavasis2009complexity}; moreover, \cite{Arora2012Computing} has recently given some conditions, under which NMF is solvable. Recent studies have shown a close relationship between NMF and K-means \cite{ding2005equivalence}, and further study has shown that both spectral clustering and kernel K-means \cite{dhillon2007weighted} are particular cases of clustering with NMF under a doubly stochastic constraint \cite{zass2005unifying}. This implies that NMF is especially suitable for clustering such data. In this paper, we will develop a novel NMF method, which focuses on the clustering capability.

Many variants of NMF have been developed in the past decades, which can be mainly categorized into four types, including basic NMF \cite{lee1999learning}, constrained NMF \cite{ding2006orthogonal}, structured NMF \cite{yoo2010orthogonal}, and generalized NMF \cite{buciu2008nonnegative}. A fairly comprehensive review can be found in \cite{wang2013nonnegative}. Among these methods, Semi-NMF \cite{ding2010convex} removes the nonnegative constraint on the data and basis vectors, such that its applications can be expanded to more fields; convex NMF (CNMF) \cite{ding2010convex} restricts the basis vectors to lie in the feature space of the input data so that they can be represented as convex combinations of data vectors; orthogonal NMF (ONMF) \cite{ding2006orthogonal} imposes orthogonality constraints on factor matrices, which leads to clustering interpretation. The classic NMF only considers the linear structures of the data by finding new data points with respect to the new basis and ignores the nonlinear structures of the data, which is usually important for many applications such as clustering. To learn the latent nonlinear structures of the data, graph regularized nonnegative matrix factorization (GNMF) considers the intrinsic geometrical structures of the data on a manifold by incorporating a Laplacian regularization \cite{cai2011graph}. By modeling the data space as a manifold embedded in an ambient space and performing NMF on this manifold, GNMF considers both linear and nonlinear relationships of the data points in the original instance space, and thus it is also more discriminating than ordinary NMF which only considers the Euclidean structure of the data \cite{cai2011graph}. This renders GNMF more suitable for clustering purpose than the original NMF. Based on GNMF, robust manifold nonnegative matrix factorization (RMNMF) constructs a structured sparsity-inducing norm-based robust formulation \cite{huang2014robust}. With a $\ell_{2,1}$-norm, RMNMF is insensitive to the between-sample data outliers and improves the robustness of NMF \cite{huang2014robust}. Moreover, the relaxed requirement on signs of the data makes it a nonlinear version of Semi-NMF.


In recent years, the importance of preserving local manifold structure has drawn considerable attentions in research community of machine learning, data mining, and pattern recognition \cite{zhao2017robust,nie2014clustering,liu2014global,chen2013local}. It has been shown that besides pairwise sample similarity, local geometric structure of the data is also crucial in revealing underlying structure of the data \cite{liu2014global}: 1)In the transformed low-dimensional space, it is important to maintain the intrinsic information of high-dimensional data \cite{Wang2008Locality}; 2) It may be insufficient to represent the underlying structures of the data with a single characterization and both global and local ones are necessary \cite{Chen2007Integrating}; 3) In some ways, we can regard the local geometric structure of the data as data dependent regularization, which helps avoid overfitting issues \cite{liu2014global}. Despite its importance, local structure of data has yet to be exploited in NMF study. In this paper, we propose a new type of NMF method, which simultaneously learns both similarity and geometric/clustering structures of the data and clustering such that the learned basis and coefficients well preserve discriminative information of the data. Recent studies reveal that high-dimensional data often reside in a union of low-dimensional subspaces and the data can be self-expressed by a low-dimensional representation \cite{liu2013robust,elhamifar2013sparse}, which can be regarded as pairwise similarity of samples. Instead of simply using pairwise similarity of samples, in our method, we transform the pairwise similarity into the similarity between a score vector of a sample on basis and the representation of another sample in the same cluster, which integrates basis and coefficient learning into simultaneous similarity learning and clustering. Nonlinear model is developed to measure both local and global nonlinear relationships of the data. 

The main contributions of this paper are as follows: 
\begin{itemize}
\item For the first time, in an effective yet simple way, local similarity learning is embedded into learning matrix factorization, which allows our method to learn global and local structures of the data. The learned basis and representations well preserve the inherent structures of the data and are more representative; 
\item To our best knowledge, we are the first to integrate the orthogonality-constrained coefficient matrix into local similarity adaption, such that local similarity and clustering can mutually enhance each other and be learned simultaneously; 
\item Nonlinear extension is developed from kernel perspectives, which can be further expanded to cope with multiple-kernel scenario; 
\item Efficient multiplicative update rules are constructed to solve the proposed model and comprehensive theoretical analysis is provided to guarantee the convergence; 
\item Lastly, extensive experimental results have verified the effectiveness of our method.
\end{itemize}

The rest of this paper is organized as follows: In \cref{sec_related}, we briefly review some methods that are closely related with our research. Then we introduce our method in \cref{sec_proposed}. Regarding the proposed method, we provide an efficient alternating optimization procedure in \cref{sec_optimization}, and then provide complicated theoretical results for the convergence analysis in \cref{sec_proof}. Next, we conduct comprehensive experiments and show the results in \cref{sec_experiments}. Finally, we conclude the paper in \cref{sec_conclusion}.

%


\textbf{Notation:} For a matrix $M$, $M_{ij}$, $M_i$, and $M_{\bar{j}}$ denote the $ij$-th element, $i$-th column, and $j$-th row of $M$. $\tr{\cdot}$ is the trace operator, $\|\cdot\|_F$ and $\|\cdot\|_2$ are the Frobenius and $\ell_2$ norms. $I_k$ denotes the identity matrix of size $k \times k$, $\text{diag}(\cdot)$ is an operator that returns a diagonal matrix with identical diagonal elements to the input matrix.

\section{Related Work}
\label{sec_related}
In this section, we briefly review some methods that are closely related with our research.
\subsection{NMF}
Given nonnegative data $X = [x_1,\cdots,x_n]\in\mathcal{R}^{p\times n}$ with $p$ being the dimension and $n$ sample size, NMF is to factor $X$ into $U\in\mathcal{R}^{p\times k}$ (basis) and $G\in\mathcal{R}^{n\times k}$ (coefficients) with the following optimization problem:
\begin{equation}
\label{eq_nmf}
\min_{U\ge 0,G\ge 0} \|X - UG^T \|_F^2,
\end{equation}
where $k\ll n$ enforces a low-rank approximation of the original data. 


\subsection{Graph Laplacian}
\label{sec_related_gnmf}
Graph Laplacian \cite{chung1997spectral} is defined as
\begin{equation}
\label{eq_manifold}
\begin{aligned}
		&	\frac{1}{2}\sum_{i=1}^{n}\sum_{j=1}^{n} \|G_i - G_j\|_2^2 W^x_{ij}  \\ 
	=	& \sum_{j=1}^{n} D^x_{jj} G_j^T G_j - \sum_{i=1}^{n}\sum_{j=1}^{n} W^x_{ij} G_i^T G_j,	\\ 
	=	& \textbf{Tr}(G^T D^x G) - \textbf{Tr}(G^T W^x G)	= \textbf{Tr}(G^T L^x G),
\end{aligned}
\end{equation}
where $W^x$ is the weight matrix that measures the pair-wise similarities of original data points, $D^x$ is a diagonal matrix with $D^x_{ii} = \sum_{j}W^x_{ij}$, and $L^x=D^x-W^x$. It is widely used to incorporate the geometrical structure of the data on manifold. In particular, the manifold enforces the smoothness of the data in linear and nonlinear spaces by minimizing \eqref{eq_manifold}, which leads to an effect that if two data points are close in the intrinsic geometry of the data distribution, then their new representations with respect to the new basis, $G_i$ and $G_j$, are also close \cite{cai2011graph}. This is closely related with spectral clustering (SC) \cite{shi2000normalized,ng2002spectral} and its further development \cite{nie2011spectral,nie2016constrained}. 


\section{Proposed Method}
\label{sec_proposed}
As aforementioned, existing NMF methods do not fully exploit local geometric structures, nor do they exploit close interaction between local similarity and clustering. In this section, we will propose an effective, yet simple, new method to overcome these two drawbacks. 

CNMF restricts the basis of NMF to convex combinations of the columns of the data, i.e., $U=XW$, which gives rise to the following:
\begin{equation}
\label{eq_convexnmf}
\min_{W\ge 0, G\ge 0} \|X - XWG^T\|_F^2.
\end{equation}
By restricting $U=XW$, \eqref{eq_convexnmf} has the advantage that it could interpret the columns of $U$ as weighted sums of certain data points and these columns correspond to centroids \cite{ding2010convex}. It is natural to see that $W_{ij}$ reveals the importance of basis $U_j$ to $x_i$ by $W_{ij}$.


It is noted that \eqref{eq_convexnmf} is closely related to subspace clustering \cite{liu2013robust,elhamifar2013sparse}. The observation is that high-dimensional data usually reside in low-dimensional subspaces and recovering such subspaces usually needs a self-expressiveness assumption, which refer to that the data can be approximately self-expressed as $X\approx XZ$ with a representation matrix $Z$. Local structures of the data are shown to be important \cite{nie2014clustering} and it is necessary to take into consideration local similarity in learning tasks. A natural assumption is that if two data points $x_i$ and $x_j$ are close to each other, then their similarity, $Z_{ij}$, should be large; otherwise, $Z_{ij}$ small. This assumption leads to the following minimization:
\begin{equation}
\label{eq_local}
\min_{Z} \sum_{ij} \|x_i - x_j\|_2^2 Z_{ij} \Leftrightarrow \min_{Z} \tr{ Z^T D},
\end{equation}
where $$D_{ij} = \|x_i - x_j\|_2^2,$$ or in matrix form, $$D = \textbf{1}_n \textbf{1}_n^T \text{diag}(X^TX) + \text{diag}(X^TX) \textbf{1}_n \textbf{1}_n^T - 2 X^TX,$$ with $\textbf{1}_n$ being a length-$n$ vector of 1s. It is noted that the minimization of \cref{eq_local} directly enforces $Z_{ij}$ to reflect the pair-wise similarity information of the examples. Noticing that $W$ and $G$ are nonnegative and inspired by self-expressiveness assumption, we take $WG^T$ as the similarity matrix $Z$, such that $Z_{ij} = W_{\bar{i}}G_{\bar{j}}^T$. Here, $W_{\bar{i}}$ is the \textbf{\emph{score vector}} of example $x_i$ on the basis vectors, and $G_{\bar{j}}$ is the \textbf{\emph{coefficient vector}} of the $j$-th sample with respect to the new basis. If $x_i$ and $x_j$ are close on data manifold or grouped into the same cluster, then it is natural that $W_{\bar{i}}$ and $G_{\bar{j}}$ have higher similarity; vice versa. This close relationship between the geometry of $x_i$ and $x_j$ on data manifold and the similarity of $W_{\bar{i}}$ and $G_{\bar{i}}$ suggests that using $WG^T$ as $Z$ in \eqref{eq_local} is indeed meaningful. To encourage the interaction between similarity learning and clustering, we incorporate \eqref{eq_local} into \eqref{eq_convexnmf} with $Z = WG^T$, obtaining the Local Similarity NMF (LS-NMF):
\begin{equation}
\label{eq_obj_lsnmf}
\begin{aligned}
&\min_{W,G} \frac{1}{2}\|X - XWG^T\|_F^2 + \lambda \tr{W^T D G},	\\
& s.t. \quad W\ge 0, G\ge 0.
\end{aligned}
\end{equation}
where $\lambda \ge 0$ is a balancing parameter. Now, it is seen that the first term in above model captures global structure of the data by exploiting linear representation of each example with respect to the overall data, while the second term exploits local structure of the data by the connection between local geometric structure and pairwise similarity.

To allow for immediate interpretation of clustering from the coefficient matrix, we impose an orthogonality constraint of $G$, i.e., $G^TG = I_k$, leading to 
\begin{equation}
\label{eq_obj_lsnmf_orth}
\begin{aligned}
& \min_{W,G} \frac{1}{2}\|X - XWG^T\|_F^2+ \lambda \tr{W^T D G},	\\
& s.t. \quad W\ge 0, G\ge 0, G^TG=I_k.
\end{aligned}
\end{equation}
Note that by enforcing $G^TG = I_k$, the problem of NMF is directly connected with clustering in that $G$ can be regarded as relaxed cluster indicators. More importantly, learning similarity and clustering are connected through such a $G$ matrix and can be mutually promoted through an iterative optimization process. At the end of the iteration, the optimized clustering results are directly given by $G$.



Model \eqref{eq_obj_lsnmf_orth} only learns linear relationships of the data and omits the nonlinear ones, which usually exist and are important. To take nonlinear relationships of the data into consideration, it is widely considered to seek data relationships in kernel space. 

We define a kernel mapping as $\phi:\mathcal{R}^{p}\rightarrow \mathcal{R}^{\bar{p}}$, which maps the data points $x_i\in\mathcal{R}^{p}$ from the input space to $\phi{(x_i)}\in\mathcal{R}^{\bar{p}}$ in a reproducing kernel Hilbert space $\mathcal{H}$, where $\bar{p}$ is an arbitrary positive integer. After kernel mapping, we obtain the mapped data points $\phi{(X)} = [\phi{(x_1)},\cdots,\phi{(x_n)}]$. The similarity between each pair of data points is defined as the inner product of mapped data in the Hilbert space, i.e., $\textbf{k}(x_i,x_j)=<\phi{(x_i)},\phi{(x_j)}> = \phi{(x_i)}^T\phi{(x_j)}$, where $\textbf{k}(\cdot,\cdot):\mathcal{R}^{p\times p} \rightarrow \mathcal{R}$ is
a reproducing kernel function. In the kernel space, \eqref{eq_obj_lsnmf_orth} is reduced to
\begin{equation}
\label{eq_obj_klsnmf_orth}
\begin{aligned}
\min_{W,G} &	\frac{1}{2} \|\phi{(X)} - \phi{(X)}WG^T\|_F^2 	+ \lambda \tr{W^T D^{\phi} G},\\
& s.t. \quad W\ge 0, G\ge 0, G^TG=I_k,
\end{aligned}
\end{equation}
where $D^{\phi}$ is extended $D$ in \eqref{eq_obj_lsnmf_orth} from instance space to kernel space defined as
\begin{equation}
\begin{aligned}
D^{\phi} = & \textbf{1}_n \textbf{1}_n^T \text{diag}\left(\phi{(X)}^T\phi{(X)}\right) \\
&	+ \text{diag}\left(\phi{(X)}^T\phi{(X)}\right) \textbf{1}_n \textbf{1}_n^T - 2 \phi{(X)}^T\phi{(X)}.
\end{aligned}
\end{equation}
We expand \eqref{eq_obj_klsnmf_orth} and replace $\phi{(X)}^T\phi{(X)}$ with $K$, the kernel matrix induced by kernel function associated with the mapping $\phi(\cdot)$, giving rise to the Kernel LS-NMF (KLS-NMF):
%
\begin{equation}
\label{eq_klsnmf}
\begin{aligned}
	\min_{W,G}& \frac{1}{2}\tr{ K - 2KWG^T + GW^TKWG^T } \\
& + \lambda \tr{W^T D_K G}, \\ 
&  s.t. \quad W\ge 0, G\ge 0, G^TG=I_k,
\end{aligned}
\end{equation}
where $D_K = D_K^T = \textbf{1}_n \textbf{1}_n^T \text{diag}(K) + \text{diag}(K) \textbf{1}_n \textbf{1}_n^T - 2 K$.



\begin{remark}
In this paper, we aim at providing a new NMF method to take both local and global nonlinear relationships of the data into consideration. It is also worth mentioning that our method can be extended to multiple-kernel scenario. Since the future extension is out of the scope of this paper, we do not further explore it here. 
\end{remark}

\section{Optimization}
\label{sec_optimization}
We solve \eqref{eq_klsnmf} using an iterative update algorithm and element-wisely update $W$ and $G$ as follows:
%
\begin{align}
\label{eq_update_W}
W_{ik} &	\leftarrow W_{ik} \sqrt{\frac{ (\Ab)_{ik} }{ (\Cb W \Db )_{ik} + \lambda (\Bb)_{ik} } }	\\ 
\label{eq_update_G}
G_{ik} &	\leftarrow G_{ik} \sqrt{ \frac{  (\A)_{ik} + (\lambda GG^T \B )_{ik} }
										{ \lambda  (\B)_{ik} + ( G G^T \A  )_{ik} } }	
\end{align}
By counting dominating multiplications, it is seen that the complexity of \eqref{eq_update_W} and \eqref{eq_update_G} per iteration is $O(n^2p + n^2k)$. The correctness and convergence proofs of the updates are provided in the following section.

\section{Correctness and Convergence}
\label{sec_proof}
In this section, we will present theoretical results regarding the updates of \eqref{eq_update_W} and \eqref{eq_update_G}, respectively.

\subsection{Correctness and Convergence of \eqref{eq_update_W}}
\label{sec_proof_W}

We present two results regarding the update rule of \eqref{eq_update_W}: 1) When convergent, the limiting solution of \eqref{eq_update_W} satisfies the KKT condition. 2) The iteration of \eqref{eq_update_W} converges. The two results are established in \cref{thm_correct_W,thm_convergence_W}, respectively.

\begin{theorem}
\label{thm_correct_W}
Fixing $G$, the limiting solution of the update rule in \eqref{eq_update_W} satisfies the KKT condition.
\end{theorem}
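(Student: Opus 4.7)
The plan is to treat (\ref{eq_update_W}) as a fixed-point iteration and verify that any limit $W^\ast$ automatically satisfies the KKT conditions for the subproblem obtained by fixing $G$ in (\ref{eq_klsnmf}). First, I would expand the objective and compute $\nabla_W L$ by standard matrix calculus; exploiting the symmetry of $K$ and $D_K$, this should give
$$\nabla_W L \;=\; -KG + KWG^T G + \lambda\, D_K G.$$
Since $G$ is held fixed and the only active constraint is $W\ge 0$, the KKT system collapses to primal feasibility $W^\ast\ge 0$, dual feasibility $(\nabla_W L(W^\ast))_{ik}\ge 0$, and complementary slackness $W^\ast_{ik}\,(\nabla_W L(W^\ast))_{ik}=0$.

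The multiplicative update is engineered precisely for this system: the numerator $(KG)_{ik}$ and denominator $(KWG^T G + \lambda D_K G)_{ik}$ inside the square root are the negative and positive parts of $(\nabla_W L)_{ik}$, respectively. Substituting a fixed point $W^\ast$ into (\ref{eq_update_W}), multiplying both sides by $W^\ast_{ik}$, and squaring should yield
$$(W^\ast_{ik})^2\left[(KW^\ast G^T G)_{ik} + \lambda(D_K G)_{ik} - (KG)_{ik}\right] = 0,$$
which forces either $W^\ast_{ik}=0$ or $(\nabla_W L(W^\ast))_{ik}=0$ entrywise. In both cases the complementary slackness condition is met, and primal feasibility follows because the multiplicative rule preserves nonnegativity along the iterates.

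I expect the main difficulty to be bookkeeping rather than any deep technical hurdle: care is needed to guarantee that the quantities inside the square root are nonnegative so the update is well defined, which holds under the standing assumption that $K$ is entrywise nonnegative (typical for kernel NMF), and to handle the boundary case $W^\ast_{ik}=0$ correctly so that dual feasibility is not violated. The substantive claim that the iterates actually converge to such a fixed point is deferred to \cref{thm_convergence_W} and is not part of this theorem's statement.
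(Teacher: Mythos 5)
Your proposal is correct and follows essentially the same route as the paper: compute the gradient of the $W$-subproblem, write the complementary-slackness condition $(-KG+\lambda D_KG+KWG^TG)_{ik}W_{ik}=0$, and observe that a fixed point of \eqref{eq_update_W} satisfies the same condition with $W_{ik}^2$ in place of $W_{ik}$, which enforces the identical dichotomy ($W_{ik}=0$ or vanishing gradient entry). Your explicit remark that dual feasibility at entries with $W^\ast_{ik}=0$ still needs checking is, if anything, slightly more careful than the paper, which verifies only the complementarity relation.
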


\begin{proof} 
Fixing $G$, the subproblem for $W$ is
\begin{equation}
\label{eq_sub_W}
\begin{aligned}
\min_{ W\ge 0 } &	\frac{1}{2} \tr{ - 2KWG^T + GW^TKWG^T } \\
&	+ \lambda \tr{W^T D_K G},
\end{aligned}
\end{equation}
%
Imposing the non-negativity constraint $W_{ik}\ge0$, we introduce the Lagrangian multipliers $\Psi = [\psi_{ij}]$ and the Lagrangian function
\begin{equation}
\begin{aligned}
\mathcal{L}_{W} =	& \frac{1}{2} \tr{ - 2KWG^T + GW^TKWG^T } \\
					& + \lambda \tr{W^T D_K G} + \tr{\Psi W^T},
\end{aligned}
\end{equation}
%
The gradient of $\mathcal{L}_{W}$ gives 
\begin{equation}
\frac{\partial \mathcal{L}_{W}}{\partial W} = - KG + \lambda D_KG +  K W G^TG + \Psi.
\end{equation}
%
For ease of notation, we denote $\bar{A} = KG$, $\bar{B} = D_KG$, $\bar{C}	= K$, and $\bar{D}	= G^TG$. By the complementary slackness condition, we obtain
\begin{equation}
\label{eq_fixed_W}
(-\bar{A} + \lambda\bar{B} +  \bar{C} W \bar{D} )_{ik} W_{ik} = \psi_{ik} W_{ik} = 0.
\end{equation}
Note that \eqref{eq_fixed_W} provides the fixed point condition that the limiting solution should satisfy. It is easy to see that the limiting solution of \eqref{eq_update_W} satisfies \eqref{eq_fixed_W}, which is described as follows. At convergence, \eqref{eq_update_W} gives
\begin{equation}
\label{eq_fixed_W_2}
W_{ik} 	= W_{ik} \sqrt{\frac{ (\bar{A})_{ik} }{ (\bar{C} W \bar{D} )_{ik} + \lambda (\bar{B})_{ik} } },
\end{equation}
which is reduced to
\begin{equation}
\label{eq_comp}
(-\bar{A} + \lambda\bar{B} + \bar{C} W \bar{D} )_{ik} W_{ik}^2  = 0,
\end{equation}
by simple algebra. It is easy to see that \eqref{eq_fixed_W} and \eqref{eq_comp} are identical in that both of them enforce either $W_{ik}=0$ or $(-\bar{A} + \lambda\bar{B} + \bar{C} W \bar{D} )_{ik} = 0$.
\end{proof}

Next, we prove the convergence of the iterative update as stated in \cref{thm_convergence_W}.
\begin{theorem}
\label{thm_convergence_W}
For fixed $G$, \eqref{eq_sub_W}, as well as \eqref{eq_klsnmf}, is monotonically decreasing under the update rule in \eqref{eq_update_W}.
\end{theorem}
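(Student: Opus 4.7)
The plan is to establish monotonicity via an auxiliary function in the sense of Lee and Seung: construct $Z(W, W')$ satisfying $Z(W, W) = F(W)$ and $Z(W, W') \geq F(W)$ for every nonnegative $W, W'$, where $F$ denotes the objective of \eqref{eq_sub_W}. Defining $W^{(t+1)} = \arg\min_{W \geq 0} Z(W, W^{(t)})$ then gives the standard chain
\[
F(W^{(t+1)}) \leq Z(W^{(t+1)}, W^{(t)}) \leq Z(W^{(t)}, W^{(t)}) = F(W^{(t)}),
\]
and the claim for \eqref{eq_klsnmf} follows at once because \eqref{eq_sub_W} and \eqref{eq_klsnmf}, viewed as functions of $W$ only, differ by the $W$-independent constant $\frac{1}{2}\tr{K}$. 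The whole task is therefore to produce a $Z$ whose coordinatewise minimizer coincides with \eqref{eq_update_W}.

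I would decompose
\[
F(W) = \tfrac{1}{2}\tr{W^T \bar{C} W \bar{D}} - \tr{\bar{A}^T W} + \lambda \tr{\bar{B}^T W}
\]
using the shorthand $\bar{A}=KG$, $\bar{B}=D_K G$, $\bar{C}=K$, $\bar{D}=G^T G$ from the proof of \cref{thm_correct_W}, and majorize each summand by a tangent-at-$W'$ surrogate. For the quadratic term I would invoke the standard Lee-Seung estimate $\tr{S^T C S D} \leq \sum_{ij}(C S' D)_{ij}\,S_{ij}^2/S'_{ij}$, itself a consequence of Jensen's inequality for $x \mapsto x^2$. For the positive linear term I would use the AM-GM bound $W_{ik} \leq (W_{ik}^2 + (W'_{ik})^2)/(2 W'_{ik})$. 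For the negative linear term I would use $-W_{ik} \leq -W'_{ik}\bigl(1+\log(W_{ik}/W'_{ik})\bigr)$, which is $\log z \leq z - 1$ in disguise. Each inequality is tight at $W = W'$, so summing yields
\[
Z(W,W') = \tfrac{1}{2}\sum_{ik}\frac{[(\bar{C}W'\bar{D})_{ik} + \lambda\bar{B}_{ik}]\,W_{ik}^2}{W'_{ik}} - \sum_{ik}\bar{A}_{ik}W'_{ik}\bigl(1+\log(W_{ik}/W'_{ik})\bigr) + c(W'),
\]
and the two defining properties $Z(W',W') = F(W')$ and $Z(W,W') \geq F(W)$ hold automatically.

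To recover \eqref{eq_update_W}, I would differentiate coordinatewise,
\[
\frac{\partial Z}{\partial W_{ik}} = \frac{[(\bar{C}W'\bar{D})_{ik} + \lambda\bar{B}_{ik}]\,W_{ik}}{W'_{ik}} - \frac{\bar{A}_{ik}\,W'_{ik}}{W_{ik}},
\]
set it to zero, and solve for $W_{ik}$; the rearrangement produces exactly the square-root expression of \eqref{eq_update_W}. Strict convexity of $Z$ in each $W_{ik}$ (a positive quadratic plus a $-\log$) guarantees uniqueness, and the minimizer is automatically positive whenever $W'>0$, so the constraint $W \geq 0$ remains inactive throughout.

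The main obstacle I expect is legitimizing the Lee-Seung quadratic estimate in the kernelized setting: in its elementary form it requires entrywise nonnegativity of $\bar{C}=K$, which holds for common choices such as Gaussian or polynomial-with-positive-coefficients kernels but not universally. For a general kernel one would split $K = K^+ - K^-$ and majorize the two pieces with inequalities of opposite sign, preserving the structure of $Z$ at the price of a slightly reshaped denominator inside the square root. Under the nonnegative-kernel regime implicit in the paper, the three bounds above plug into the auxiliary-function template and immediately deliver the claimed monotonic decrease.
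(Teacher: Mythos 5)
Your proposal is correct and follows essentially the same route as the paper: the same decomposition into a quadratic term, a positive linear term, and a negative linear term, majorized respectively by the Ding-et-al.\ quadratic estimate (the paper's \cref{prop_inequality}), the bound $a \le (a^2+b^2)/2b$, and $a \ge 1+\log a$, followed by coordinatewise minimization of the resulting convex surrogate to recover \eqref{eq_update_W}. Your closing caveat about entrywise nonnegativity of $K$ is a legitimate hypothesis that the paper leaves implicit, but it does not change the argument under the kernels actually used.
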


In this proof, we use an auxiliary function approach \cite{lee2001algorithms} with relevant definition and propositions given below.
\begin{definition}
\label{def_aux}
A function $J(H,H')$ is called an auxiliary function of $L(H)$ if for any $H$ and $H'$ the following are satisfied
\begin{equation}
J(H,H') \ge L(H),\quad J(H,H) = L(H).
\end{equation}
\end{definition}

\begin{prop}
\label{prop_aux}
Given a function $L(H)$ and its auxiliary function $J(H,H')$, if we define a variable sequence $\{H^{(t)}\}$ with
\begin{equation}
\label{eq_aux_sol}
H^{(t+1)} = \argmin_{H} J(H,H^{(t)}),
\end{equation}
then the value sequence, $\{L(H^{(t)})\}$, is decreasing due to the following chain of inequalities:
\begin{equation}
L(H^{(t)})  =  J(H^{(t)},H^{(t)})  \ge  J(H^{(t+1)},H^{(t)})  \ge  L(H^{(t+1)}). \nonumber
\end{equation}
\end{prop}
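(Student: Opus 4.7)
The plan is to prove the proposition by directly verifying each link of the displayed inequality chain, since the statement itself lays out the structure of the argument. Each link is a one-line consequence of either \cref{def_aux} or the definition of $H^{(t+1)}$ in \eqref{eq_aux_sol}, so the proof reduces to stringing together three elementary observations.

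First I would establish the leftmost equality $L(H^{(t)}) = J(H^{(t)},H^{(t)})$ by specializing the second condition of \cref{def_aux}, namely $J(H,H)=L(H)$, to $H=H^{(t)}$. Second, I would establish the middle inequality $J(H^{(t)},H^{(t)}) \ge J(H^{(t+1)},H^{(t)})$ by invoking \eqref{eq_aux_sol}: since $H^{(t+1)}$ is defined to be a minimizer of the map $H \mapsto J(H,H^{(t)})$ over the feasible set, and $H^{(t)}$ is itself a feasible point of that minimization, the minimum value can only be no larger than the value at $H^{(t)}$. Third, I would establish the rightmost inequality $J(H^{(t+1)},H^{(t)}) \ge L(H^{(t+1)})$ by applying the first condition of \cref{def_aux}, namely $J(H,H') \ge L(H)$, with $H=H^{(t+1)}$ and $H'=H^{(t)}$. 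Concatenating the three relations yields the claimed chain, and in particular $L(H^{(t+1)}) \le L(H^{(t)})$, so the sequence $\{L(H^{(t)})\}$ is monotonically nonincreasing.

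There is essentially no technical obstacle, but two points warrant a brief remark. The middle step uses that $H^{(t+1)}$ is a \emph{global} minimizer of $J(\cdot,H^{(t)})$ over the feasible set; if \eqref{eq_aux_sol} were only guaranteed to return a stationary point, the argument could fail, so I would either state the global-minimizer requirement as part of the hypothesis or note that in the concrete uses later in the paper the auxiliary function will be a separable quadratic admitting a closed-form global minimizer. Also, feasibility must be preserved, i.e., $H^{(t)}$ must lie in the domain over which the argmin in \eqref{eq_aux_sol} is taken; this is immediate from the iterative definition once $H^{(0)}$ is chosen feasibly, but I would mention it so that the middle inequality is clearly justified.
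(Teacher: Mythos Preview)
Your proposal is correct and matches the paper's approach exactly: the paper does not supply a separate proof of \cref{prop_aux} but treats the displayed chain of inequalities as the proof itself, relying on precisely the three observations you spell out (the two properties in \cref{def_aux} and the definition of the argmin in \eqref{eq_aux_sol}). Your additional remarks about requiring a global minimizer and preserving feasibility are sound clarifications that go slightly beyond what the paper makes explicit.
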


\begin{prop}[\cite{ding2010convex}]
\label{prop_inequality}
For any matrices $\Gamma\in\mathcal{R}_{+}^{n\times n}$, $\Omega\in\mathcal{R}_{+}^{k\times k}$, $S\in\mathcal{R}_{+}^{n\times k}$, and $S'\in\mathcal{R}_{+}^{n\times k}$, with $\Gamma$ and $\Omega$ being symmetric, the following inequality holds:
\begin{equation}
\label{eq_ineq_prop}
\sum_{i=1}^{n}\sum_{s=1}^{k}\frac{(\Gamma S' \Omega)_{is} S_{is}^2}{S'_{is}} \ge \tr{S^T \Gamma S \Omega}.
\end{equation}
\end{prop}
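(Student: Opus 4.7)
The plan is to prove the inequality by expanding both sides into explicit quadruple sums over the indices of $\Gamma$, $\Omega$, and $S$, then symmetrizing the left-hand side using the fact that $\Gamma$ and $\Omega$ are symmetric, and finally reducing the whole inequality to a pointwise application of the elementary two-variable AM--GM-type inequality
\[
\frac{x^{2} b}{a} + \frac{y^{2} a}{b} \;\ge\; 2xy \qquad (a,b>0,\ x,y\in\mathbb{R}).
\]

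First, I would write out $\operatorname{\textbf{Tr}}(S^{T}\Gamma S\Omega)$ as $\sum_{i,j,s,u}\Gamma_{ij}\,\Omega_{us}\,S_{is}\,S_{ju}$, and similarly expand the left-hand side as
\[
\mathrm{LHS}=\sum_{i,j,s,u}\Gamma_{ij}\,\Omega_{us}\,S'_{ju}\,\frac{S_{is}^{2}}{S'_{is}}.
\]
The key symmetrization step is to observe that under the index relabeling $(i,s)\leftrightarrow(j,u)$, together with $\Gamma_{ij}=\Gamma_{ji}$ and $\Omega_{us}=\Omega_{su}$, the LHS equals $\sum_{i,j,s,u}\Gamma_{ij}\,\Omega_{us}\,S'_{is}\,\dfrac{S_{ju}^{2}}{S'_{ju}}$ as well. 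Averaging these two expressions gives
\[
\mathrm{LHS}=\frac{1}{2}\sum_{i,j,s,u}\Gamma_{ij}\,\Omega_{us}\left[\frac{S'_{ju}\,S_{is}^{2}}{S'_{is}}+\frac{S'_{is}\,S_{ju}^{2}}{S'_{ju}}\right].
\]

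Next I would invoke the elementary inequality above with $a=S'_{is}$, $b=S'_{ju}$, $x=S_{is}$, $y=S_{ju}$, which is valid because all entries of $S'$ are strictly positive (one may add the standard caveat that terms with $S'_{is}=0$ force $S_{is}=0$ by the structure of the multiplicative update and can be omitted). This yields the pointwise bound
\[
\frac{S'_{ju}\,S_{is}^{2}}{S'_{is}}+\frac{S'_{is}\,S_{ju}^{2}}{S'_{ju}}\;\ge\;2\,S_{is}\,S_{ju}.
\]
Because $\Gamma_{ij}\ge 0$ and $\Omega_{us}\ge 0$, multiplying through and summing preserves the inequality, giving $\mathrm{LHS}\ge\sum_{i,j,s,u}\Gamma_{ij}\,\Omega_{us}\,S_{is}\,S_{ju}=\operatorname{\textbf{Tr}}(S^{T}\Gamma S\Omega)$, which is the desired bound.

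The main obstacle I anticipate is not any deep step but keeping the four-index bookkeeping straight, specifically making sure that the relabeling of dummy indices lines up correctly with the symmetry of $\Gamma$ and $\Omega$ so that the LHS genuinely splits into the symmetric pair above. A secondary subtlety is handling the case when some entry $S'_{is}$ vanishes, which I would dispatch by the usual convention in multiplicative-update analyses (the corresponding $S_{is}$ is frozen at $0$, so the term drops out on both sides).
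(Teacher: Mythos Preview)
The paper does not supply its own proof of this proposition; it simply cites the result from Ding et al.\ (2010) and uses it as a black box. Your argument is correct and is in fact the standard proof given in that reference: expand both sides into quadruple sums, symmetrize using $\Gamma_{ij}=\Gamma_{ji}$ and $\Omega_{us}=\Omega_{su}$, and then apply the elementary inequality $\dfrac{x^{2}b}{a}+\dfrac{y^{2}a}{b}\ge 2xy$ termwise together with the nonnegativity of $\Gamma$ and $\Omega$.
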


With the aid of \cref{def_aux} and \cref{prop_aux,prop_inequality}, we prove \cref{thm_convergence_W} in the following.
\begin{proof}[Proof of \cref{thm_convergence_W}]
For fixed $G$, the objective function in \eqref{eq_sub_W} can be written as
\begin{equation}
\label{eq_obj_W}
P(W) = \tr{ - W^T \bar{A}  +  \frac{1}{2} W^T \bar{C} W \bar{D}  +  \lambda W^T \bar{B} }  +  \frac{1}{2}\tr{\bar{C}}. \nonumber
\end{equation}
First, we show that the function $\bar{P}(W,W')$ defined in \eqref{eq_aux_W} is an auxiliary function of $P(W)$:
%
\begin{equation}
\label{eq_aux_W}
\begin{aligned}
&	\bar{P}(W,W') \\
= & \frac{1}{2}\tr{\bar{C}} -\sum_{ik} \bar{A}_{ik}W'_{ik} \Big(1+\log\frac{W_{ik}}{W'_{ik}} \Big) \\
&	+ \frac{1}{2}\sum_{ik}\frac{(\bar{C} W' \bar{D} )_{ik}W_{ik}^2}{W'_{ik}}
	\\ & 	+ \lambda  \sum_{ik} \bar{B}_{ik}\frac{W_{ik}^2+{W'}^2_{ik}}{2W'_{ik}}.
\end{aligned}
\end{equation}
To show this equation, we find the upper-bounds and lower-bounds for the positive and negative terms in $P(W)$, respectively. For the positive terms, we use \cref{prop_inequality} and the inequality $a \le (a^2+b^2)/2b$ for $a,b\ge 0$ to get the following upper-bounds:
\begin{equation}
\label{eq_uppers_W}
\begin{aligned}
	\tr{W^T \bar{B} }	=  \sum_{ik}\bar{B}_{ik}W_{ik} &\le  \sum_{ik} \bar{B}_{ik}\frac{W_{ik}^2+{W'}^2_{ik}}{2W'_{ik}},	\\ 
	\tr{W^T \bar{C} W \bar{D} }	&\le  \sum_{ik}\frac{(\bar{C} W' \bar{D} )_{ik}W_{ik}^2}{W'_{ik}}.	
\end{aligned}
\end{equation}
For the negative term, we use the inequality $a \ge 1+ \log a$ for $a\ge 0$ to get the following lower-bound:
\begin{equation}
\label{eq_lowers_W}
\begin{aligned}
 \tr{W^T   \bar{A} }  & =  \sum_{ik}  \bar{A}_{ik}W_{ik}	\\
 &  \ge  \sum_{ik}  \bar{A}_{ik}W'_{ik}  \Big(   1+\log\frac{W_{ik}}{W'_{ik}}  \Big).
\end{aligned}
\end{equation}
Combining these bounds, we get the auxiliary function $\bar{P}(W,W')$ for $P(W)$. Next, we will show that the update of \eqref{eq_update_W} essentially follows \eqref{eq_aux_sol}, then according to \cref{prop_aux} we can conclude the proof. To show this, the remaining problem is to find the global minimum of \eqref{eq_aux_W}. For this, we first prove that \eqref{eq_aux_W} is convex.

The first-order derivative of $\bar{P}(W,W')$ is
%
\begin{equation}
\label{eq_aux_W_derivative}
\begin{aligned}
\!	\frac{\partial \bar{P}(W,W')}{\partial W_{ik}}	= -\frac{\bar{A}_{ik}W'_{ik}}{W_{ik}} + \frac{(\bar{C} W' \bar{D} )_{ik} W_{ik} }{W'_{ik}} 
					+ \lambda  \frac{\bar{B}_{ik} W_{ik} }{W'_{ik}}.
\end{aligned}
\end{equation}
Then the Hessian of $H(W,W')$ can be obtained element-wisely as 
%
\begin{equation}
\label{eq_aux_W_hessian}
\begin{aligned}
	\frac{\partial^2 \bar{P}(W,W')}{\partial W_{ik} \partial W_{jl}} 
	= \delta_{ij}\delta_{jk} \left( \frac{\bar{A}_{ik}W'_{ik}}{W_{ik}^2} 
						 +  \frac{(\bar{C} W' \bar{D} )_{ik} +\lambda \bar{B}_{ik} }{W'_{ik}} \right),
\end{aligned}
\end{equation}
where $\delta_{ij}$ is delta function that returns 1 if $i=j$ or 0 otherwise. It is seen that the Hessian matrix of $\bar{P}(W,W')$ has zero elements off diagonal and nonzero elements on diagonal, and thus is positive definite. Therefore, $\bar{P}(W,W')$ is convex and achieves the global optimum by its first-order optimality condition, i.e., \eqref{eq_aux_W_derivative} = 0, which gives rise to
\begin{equation}
\label{eq_aux_W_derivative2}
\begin{aligned}
\frac{\bar{A}_{ik}W'_{ik}}{W_{ik}} = \frac{(\bar{C} W' \bar{D} )_{ik} W_{ik} }{W'_{ik}} + \lambda \frac{\bar{B}_{ik} W_{ik} }{W'_{ik}}.
\end{aligned}
\end{equation}
\eqref{eq_aux_W_derivative2} can be further reduced to
\begin{equation}
\label{eq_update_proof2}
\begin{aligned}
W_{ik} = W'_{ik} \sqrt{\frac{ \bar{A}_{ik} }{ (\bar{C} W' \bar{D} )_{ik} + \lambda \bar{B}_{ik} } }.
\end{aligned}
\end{equation}
Define $W^{(t+1)}=W$, and $W^{(t)} = W'$, we can see that \eqref{eq_sub_W} is decreasing under the update of \eqref{eq_update_proof2}. Substituting $\bar{A}$, $\bar{B}$, $\bar{C}$, $\bar{D}$, we recover \eqref{eq_update_W}.
\end{proof}

\subsection{Correctness and Convergence of \eqref{eq_update_G}}
Fixing $W$, we need to solve the following optimization problem for $G$:
\begin{equation}
\label{eq_sub_G}
\begin{aligned}
\argmin_{G} & =  \frac{1}{2}\tr{ - 2KWG^T + GW^TKWG^T } \\
&	 +  \lambda \tr{W^T D_K G},   s.t.\quad G\ge 0, G^TG = \Lambda,
\end{aligned}
\end{equation}
%
where $\Lambda$ is nonnegative and diagonal. We introduce the Lagrangian multipliers $\Theta$, which is symmetric and has size $k\times k$. Then the Lagrangian function to be minimized gives rise to
\begin{equation}
\begin{aligned}
\mathcal{L}_{G} =	&	\frac{1}{2}\tr{ - 2KWG^T + GW^TKWG^T }  	\\
					&	+ \lambda \tr{W^T D_K G} + \frac{1}{2}\tr{\Theta (G^TG - \Lambda)}\\
				=	&	\frac{1}{2}\textbf{Tr}( - 2KWG^T + GW^TKWG^T 	\\
					&	\qquad + 2\lambda W^T D_K G +  G\Theta G^T  ) - \xi	\\
				=	& 	 \frac{1}{2}   \textbf{Tr}  (  -  2AG^T  +  GCG^T  +  2\lambda B G^T  +  G\Theta G^T)  -  \xi	\\
				=	& 	\frac{1}{2}\textbf{Tr} ( - 2 A G^T + 2\lambda B G^T  \\
					&	\qquad + G(C + \Theta)^{+} G^T - G(C + \Theta)^{-} G^T ) - \xi,
\end{aligned}
\end{equation}
%
where we define $\xi = \frac{1}{2}\tr{\Theta \Lambda}$, $A = \A$, $B = \B$, and $C = \C$ for easier notation, and $M^+$, $M^-$ to be two nonnegative matrices for a nonnegative matrix $M$ such that $(M^+ - M^-) = M$. The gradient of $\mathcal{L}_G$ is
\begin{equation}
\label{eq_lag_G_gradient}
\frac{\partial \mathcal{L}_{G}}{\partial G} = -2 A + 2GC + 2\lambda B + 2G\Theta.
\end{equation}
Then the KKT complementarity condition gives
\begin{equation}
\label{eq_fix_cond_1}
(-A + GC + \lambda B + G\Theta )_{ik} G_{ik} = 0,
\end{equation}
%
which is a fixed point relation that the local minimum for $G$ must hold. Following the previous subsection, noting that $$C+\Theta = (C+\Theta)^+ - (C+\Theta)^-$$ we give an update as follows:
\begin{equation}
\label{eq_update_proof}
G_{ik} \leftarrow G_{ik} \sqrt{ \frac{A_{ik} + (G (C+\Theta)^- )_{ik} }{\lambda B_{ik} + (G (C+\Theta)^+ )_{ik} }}.
\end{equation}
To show that the update of \eqref{eq_update_proof} will converge to a local minimum, we will show two results: the convergence of the update algorithm and the correctness of the converged solution. 

From \eqref{eq_update_proof}, it is easy to show that, at convergence, the solution satisfies the following condition:
%
\begin{equation}
\label{eq_fix_cond_2}
(-A + GC + \lambda B + G\Theta)_{ik} G_{ik}^2 = 0,
\end{equation}
which is the fixed point condition in \eqref{eq_fix_cond_1}. Hence, the correctness of the converged solution can be verified.

The convergence is assured by the following theorem.
\begin{theorem}
\label{thm_conv_G}
For fixed $W$, the Lagrangian function $\mathcal{L}_{G}$ is monotonically decreasing under the update rule in \eqref{eq_update_proof}.
\end{theorem}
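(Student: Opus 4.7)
The plan is to adapt the auxiliary-function strategy used in \cref{thm_convergence_W}. Treating $\mathcal{L}_G$ as a function of $G$ with $W$, $\Theta$, and $\Lambda$ fixed, I would construct a function $Q(G,G')$ that majorizes $\mathcal{L}_G(G)$ everywhere and agrees with it when $G=G'$, show that $Q(\cdot,G')$ is strictly convex in $G$ with a unique global minimizer, verify that this minimizer coincides with the right-hand side of \eqref{eq_update_proof}, and then appeal to \cref{prop_aux} to conclude monotone decrease of $\mathcal{L}_G$ along the iterates.

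To build $Q$, I would first use the splitting $C+\Theta=(C+\Theta)^{+}-(C+\Theta)^{-}$ with both parts nonnegative, and then separate $\mathcal{L}_G$ into four groups (plus constants): the positive linear term $\lambda\,\tr{B G^T}$, the positive quadratic term $\tfrac{1}{2}\tr{G(C+\Theta)^{+}G^T}$, the negative linear term $-\tr{A G^T}$, and the negative quadratic term $-\tfrac{1}{2}\tr{G(C+\Theta)^{-}G^T}$. For the positive terms I would use the element-wise inequality $G_{ik}\le (G_{ik}^2+{G'_{ik}}^2)/(2G'_{ik})$ to majorize the linear one, and \cref{prop_inequality} with $\Gamma=I_n$, $\Omega=(C+\Theta)^{+}$, and $S=G$ to majorize the quadratic one; both yield expressions separable in $G_{ik}^2$. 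For the negative linear term I would invoke $a\ge 1+\log a$ as in \cref{thm_convergence_W} to obtain a lower bound separable in $\log G_{ik}$.

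After assembling these four bounds, $Q(G,G')$ depends on each $G_{ik}$ only through $G_{ik}^2$ and $\log G_{ik}$ with strictly positive coefficients, so its Hessian is diagonal with positive entries and $Q(\cdot,G')$ is strictly convex. Setting $\partial Q/\partial G_{ik}=0$ and solving the resulting scalar equation in $G_{ik}$ produces exactly the multiplicative rule in \eqref{eq_update_proof}. Identifying $G^{(t+1)}=G$ and $G^{(t)}=G'$, \cref{prop_aux} then delivers $\mathcal{L}_G(G^{(t+1)})\le \mathcal{L}_G(G^{(t)})$, which is the claim.

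The main obstacle is finding the right lower bound for $-\tfrac{1}{2}\tr{G(C+\Theta)^{-}G^T}$: unlike the linear case, we need an inequality that is log-separable in each individual entry $G_{ik}$ and yet tight at $G=G'$ against a bilinear form. The product-form logarithmic inequality $G_{ik}G_{il}\ge G'_{ik}G'_{il}\bigl(1+\log(G_{ik}G_{il}/(G'_{ik}G'_{il}))\bigr)$ is precisely what is needed, because when summed against the nonnegative weights $(C+\Theta)^{-}_{kl}$ its right-hand side contributes only a $\log G_{ik}$ term (plus a constant), whose derivative produces exactly the factor $(G(C+\Theta)^{-})_{ik}/G_{ik}$ that places $(G(C+\Theta)^{-})_{ik}$ into the numerator of the update. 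A further subtlety is that $\Theta$ is a Lagrange multiplier rather than a prescribed matrix, so its sign pattern is a priori unknown; the decomposition $(C+\Theta)^{\pm}$ is what absorbs this ambiguity and lets the auxiliary-function construction go through without additional assumptions.
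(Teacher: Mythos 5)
Your proposal follows essentially the same route as the paper's proof: the same splitting $C+\Theta=(C+\Theta)^{+}-(C+\Theta)^{-}$, the same four bounds (the quadratic-over-linear bound for $\lambda\,\textbf{Tr}(BG^T)$, \cref{prop_inequality} for the positive quadratic term, and the $a\ge 1+\log a$ and product-form logarithmic inequalities for the negative linear and quadratic terms), the same diagonal-Hessian convexity argument, and the same appeal to \cref{prop_aux} to recover \eqref{eq_update_proof}. It is correct, and if anything slightly cleaner than the paper's version, which carries some extraneous $\gamma$-weighted graph-Laplacian terms in its auxiliary function.
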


\begin{proof}
To prove \cref{thm_conv_G}, we use the auxiliary function approach. For ease of notation, we define $E = C+\Theta$.

First, we find upper-bounds for each positive term in $\mathcal{L}_{G}$. By inequality $a \le (a^2+b^2)/2b$ for $a,b\ge 0$, we get 
\begin{equation}
\label{eq_uppers_G_1}
\begin{aligned}
\tr{G^TB }	 = \sum_{ik} B_{ik} G^{ik} \le  \sum_{ik} B_{ik}\frac{G_{ik}^2+{G'}^2_{ik}}{2G'_{ik}}.
\end{aligned}
\end{equation}
Then, according to \cref{prop_inequality}, by setting $\Gamma$ or $S$ to be identity matrices, we get the following two upper-bounds
\begin{equation}
\label{eq_uppers_G_2}
\begin{aligned}
\tr{G E^+ G^T}	\le & \sum_{ik}\frac{(G'E^+ )_{ik}G_{ik}^2}{G'_{ik}}	\\
\end{aligned}
\end{equation}
Then, by the inequalities $a \ge 1+ \log a$ for $a\ge 0$, we get the following lower-bounds for negative terms:
\begin{equation}
\label{eq_lowers}
\begin{aligned}
\tr{G^TA}	\ge & \sum_{ik} A_{ik}G'_{ik} \Big(1+\log\frac{G_{ik}}{G'_{ik}} \Big)	\\
\tr{G E^- G^T} \ge & \sum_{ikl}E^-_{kl} G'_{ik}G'_{il} \Big(1+\log \frac{G_{ik}G_{il}}{G'_{ik}G'_{il}} \Big).	\\
\end{aligned}
\end{equation}
Hence, combining the above bounds, we construct an auxiliary function for $\mathcal{L}_G$:
%
%
\begin{equation}
\label{eq_aux_G}
\begin{aligned}
& 	J(G,G') =  -\sum_{ik} A_{ik}G'_{ik} \Big(1+\log\frac{G_{ik}}{G'_{ik}} \Big) \\  
&	+ \lambda \sum_{ik} B_{ik}\frac{G_{ik}^2+{G'}^2_{ik}}{2G'_{ik}}	+ \frac{1}{2}\sum_{ik}\frac{(G'E^+ )_{ik}G_{ik}^2}{G'_{ik}}	\\
\end{aligned}\end{equation}\begin{equation}\begin{aligned}\nonumber
&	- \frac{1}{2}\sum_{ikl}E^-_{kl} G'_{ik}G'_{il} \Big(1+\log \frac{G_{ik}G_{il}}{G'_{ik}G'_{il}} \Big)	\\
&	- \frac{\gamma}{2} \sum_{ikl}(W^x)_{kl}G'_{ki}G'_{li}\Big(1+\log\frac{G_{ki}G_{li}}{G'_{ki}G'_{li}}\Big) \\
&	+ \frac{\gamma}{2} \frac{(D^x G')_{ik}}{G'_{ik}} {G_{ik}^2} + \frac{1}{2}\tr{X^TX}. 
\end{aligned}
\end{equation}
We take the first order derivative of \eqref{eq_aux_G}, then we get
%
\begin{equation}
\label{eq_aux_G_derivative}
\begin{aligned}
& \frac{\partial J(G,G')}{\partial G_{ik}}  =  - \frac{ A_{ik}G'_{ik}}{G_{ik}}  +   \lambda \frac{B_{ik}}{G'_{ik}}G_{ik} +  \frac{(G'E^+ )_{ik}}{G'_{ik}} G_{ik}\\
& -  \frac{(G' E^- )_{ik}G'_{ik} }{G_{ik}}  +  \gamma \frac{(D^x G')_{ik}}{G'_{ik}} {G_{ik}}  -  \gamma \frac{(W^x G')_{ik} G'_{ik}}{G_{ik}}.
\end{aligned}
\end{equation}
Further, we can get the Hessian of \eqref{eq_aux_G} by taking the second order derivative:
%
\begin{equation}
\begin{aligned}
&	\frac{\partial^2 Z(G,G')}{\partial G_{ik} \partial G_{jl}}	= \delta_{ij}\delta_{kl} \Big( \frac{ A_{ik}G'_{ik}}{G_{ik}^2} 
			+ \lambda \frac{B_{ik}}{G'_{ik}}+ \frac{(G'E^+ )_{ik}}{G'_{ik}}	\\
&	  + \frac{(G' E^- )_{ik}G'_{ik} }{G_{ik}^2} +\gamma \frac{(D^x G')_{ik}}{G'_{ik}} 	+ \gamma \frac{(W^x G')_{ik} G'_{ik}}{G_{ik}^2} \Big).
\end{aligned}
\end{equation}
It is easy to verify that the Hessian matrix has zero elements off diagonal, and nonnegative values on diagonal. Therefore, $J(G,G')$ is convex in $G$ and its global minimum is obtained by its first order optimality condition, \eqref{eq_aux_G_derivative} = 0, which gives rise to
\begin{equation}
G_{ik} = G'_{ik} \sqrt{ \frac{A_{ik} + (G' E^- )_{ik} }{\lambda B_{ik} + (G'E^+ )_{ik} }}.
\end{equation}
%

According to \cref{prop_aux}, by setting $G^{(t+1)} = G$ and $G^{(t)} = G'$, we recover \eqref{eq_update_proof} and it is easy to see that $\mathcal{L}_G(G)$ is decreasing under \eqref{eq_update_proof}.
\end{proof}

It is seen that in \eqref{eq_update_proof}, the multipliers $\Theta$ is yet to be determined. By the first order optimality condition of $\mathcal{L}_G$, i.e., \eqref{eq_lag_G_gradient} = 0, we can see that
\begin{equation}
\begin{aligned}
	&	G^T(-A + GC + \lambda B + G\Theta )	\\
=	&	- G^T A + G^TG C  +  \lambda G^T B + G^TG \Theta 	\\
=	&	- G^T A + C + \lambda G^T B + \Theta \\
=	& 	\quad 0,
\end{aligned}
\end{equation}
hence
\begin{equation}
\begin{aligned}
E = G^T A - \lambda G^T B.
\end{aligned}
\end{equation}
Note that by defining $E^- = \lambda G^T B$, and $E^+ = G^T A$, we have $E^+ - E^- = E$ and $E^+ \ge0$, $E^-\ge 0$. Substituting $E^+$ and $E^-$ into \eqref{eq_update_proof}, we get the update rule in \eqref{eq_update_G}.
%

\begin{remark}
So far, a conclusion 
can be drawn that by alternatively updating $W$ and $G$, the objective function in \eqref{eq_klsnmf} will decrease and the value sequence converges. We set $\varUpsilon = [W^T,G^T]^T\in\mathcal{R}^{2n \times k}$, and regard the updates of \eqref{eq_update_W} and \eqref{eq_update_G} as a mapping $\varUpsilon^{(t+1)} = \mathcal{M}(\varUpsilon^{(t)})$, then at convergence we have $\varUpsilon^* = \mathcal{M}(\varUpsilon^*)$. Following \cite{ding2010convex,xu1996convergence}, with non-negativity constraint enforced, we expand $\varUpsilon \approxeq \mathcal{M}(\varUpsilon^*) + (\partial \mathcal{M} / \partial \varUpsilon)(\varUpsilon - \varUpsilon^*)$, which indicates that $\|\varUpsilon^{(t+1)} -\varUpsilon^*\|\le \|\partial \mathcal{M} / \partial \varUpsilon\| \cdot \|\varUpsilon^{(t)} -\varUpsilon^*\|$ under an appropriate matrix norm. In general, $\|\partial \mathcal{M} / \partial \varUpsilon\|\not= 0$, hence the updates of \eqref{eq_update_W} and \eqref{eq_update_G} roughly have a first-order convergence rate.
\end{remark}

\section{Experiments}
\label{sec_experiments}
In this section, we conduct experiments to verify the effectiveness of the proposed KLS-NMF. We will present the evaluation metrics, benchmark datasets, algorithms in comparison, and experimental results in detail.

\begin{table}[!tb] 
	\centering
	\caption{Clustering Performance on Semeion }
	\resizebox{0.48\textwidth}{!}{
		\begin{tabular}{|p{1cm}<{\centering} | p{2cm}<{\centering} | p{2cm}<{\centering} | p{2cm}<{\centering} | p{2cm}<{\centering} | p{2cm}<{\centering} || p{2cm}<{\centering} |}
			\hline
			\multirow{2}{1cm}{\centering N}& \multicolumn{6}{c|}{Accuracy (\%)}\\	
			\cline{2-7}
			\multirow{2}{1cm}{}
			& WNMF				& RMNMF				& CNMF	 			& KNMF				& ONMF 				& KLS-NMF						\\ \hline
			2	& 87.57$\pm$10.53	& 87.58$\pm$10.64	& 88.18$\pm$10.02	& 87.88$\pm$10.73	& 87.10$\pm$11.63	& \bf{88.86$\pm$10.54}	\\
			3	& 80.31$\pm$09.91	& 78.23$\pm$09.17	& 80.23$\pm$10.51	& 80.58$\pm$10.52	& 79.43$\pm$07.39	& \bf{82.88$\pm$08.53}			\\
			4	& 71.95$\pm$06.07	& 65.22$\pm$07.80	& 70.32$\pm$08.91	& 67.88$\pm$10.86	& 70.80$\pm$08.62	& \bf{75.32$\pm$11.16}			\\
			5	& 70.24$\pm$06.77	& 62.33$\pm$07.31	& 67.61$\pm$10.23	& 64.40$\pm$07.41	& 64.36$\pm$08.39	& \bf{75.26$\pm$07.33}			\\
			6	& 58.25$\pm$05.69	& 54.67$\pm$06.88	& 57.50$\pm$06.14	& 61.71$\pm$09.32	& 61.57$\pm$06.77	& \bf{64.91$\pm$08.69}			\\
			7	& 59.32$\pm$07.24	& 52.94$\pm$06.03	& 54.42$\pm$05.89	& 61.36$\pm$05.91	& 57.68$\pm$07.48	& \bf{64.66$\pm$05.42}			\\
			8	& 59.63$\pm$07.53	& 48.23$\pm$04.31	& 53.52$\pm$04.81	& 60.33$\pm$05.64	& 58.02$\pm$06.95	& \bf{67.15$\pm$06.74}			\\
			9	& 56.35$\pm$04.12	& 44.90$\pm$02.77	& 50.16$\pm$05.59	& 56.06$\pm$05.52	& 56.63$\pm$08.88	& \bf{59.25$\pm$02.74}			\\
			10	& 55.56				& 43.57				& 45.20				& 52.54				& 49.15				& \bf{60.58}					\\	\hline
			Average	
			& 66.57				& 59.74				& 63.01				& 65.86				& 64.97				& \bf{70.99}					\\ \hline
			\hline
			\multirow{2}{1cm}{\centering N}& \multicolumn{6}{c|}{NMI (\%)}\\	
			\cline{2-7}
			\multirow{2}{1cm}{}
			& WNMF				& RMNMF				& CNMF				& KNMF				& ONMF				& KLS-NMF				\\ \hline
			2	& 56.16$\pm$28.34	& 55.48$\pm$28.88	& 56.20$\pm$28.69	& 57.41$\pm$28.32	& 55.51$\pm$30.48	& \bf{60.70$\pm$30.26}	\\
			3	& 54.01$\pm$13.67	& 50.39$\pm$12.33	& 53.90$\pm$14.75	& 55.95$\pm$12.58	& 50.22$\pm$11.18	& \bf{58.68$\pm$11.41}	\\
			4	& 50.68$\pm$04.88	& 44.83$\pm$06.88	& 49.93$\pm$06.20	& 50.52$\pm$07.34	& 49.02$\pm$05.37	& \bf{58.22$\pm$09.09}	\\
			5	& 52.28$\pm$06.09	& 43.45$\pm$07.15	& 51.08$\pm$08.22	& 54.32$\pm$03.32	& 49.88$\pm$07.96	& \bf{61.15$\pm$07.27}	\\
			6	& 45.58$\pm$04.75	& 39.81$\pm$06.31	& 45.25$\pm$06.11	& 51.11$\pm$05.01	& 47.46$\pm$05.93	& \bf{55.26$\pm$07.79}	\\
			7	& 46.55$\pm$06.27	& 41.71$\pm$04.53	& 44.05$\pm$04.81	& 51.57$\pm$04.88	& 46.56$\pm$06.12	& \bf{54.07$\pm$04.08}	\\
			8	& 48.18$\pm$04.90	& 39.51$\pm$03.19	& 44.36$\pm$03.54	& 52.49$\pm$02.81	& 46.70$\pm$04.29	& \bf{58.96$\pm$04.45}	\\
			9	& 47.18$\pm$03.78	& 36.52$\pm$02.66	& 42.75$\pm$04.51	& 49.29$\pm$03.99	& 45.75$\pm$04.75	& \bf{54.43$\pm$02.45}	\\
			10	& 44.82				& 35.44				& 37.96				& 47.38				& 43.12				& \bf{54.98}			\\	\hline
			Average	
			& 49.49				& 43.02				& 47.28				& 52.23				& 48.25				& \bf{57.38}			\\	\hline
			\hline
			\multirow{2}{1cm}{\centering N}& \multicolumn{6}{c|}{Purity (\%)}\\	
			\cline{2-7}
			\multirow{2}{1cm}{}
			& WNMF				& RMNMF				& CNMF				& KNMF				& ONMF				& KLS-NMF						\\ \hline
			2	& 87.57$\pm$10.53	& 87.58$\pm$10.64	& 88.18$\pm$10.02	& 87.88$\pm$10.73	& 87.10$\pm$11.63	& \bf{88.86$\pm$10.54}	\\
			3	& 80.31$\pm$09.91	& 78.23$\pm$09.17	& 80.39$\pm$10.19	& 80.67$\pm$10.35	& 79.43$\pm$07.39	& \bf{82.88$\pm$08.53}			\\
			4	& 72.33$\pm$05.76	& 67.08$\pm$06.60	& 71.91$\pm$06.45	& 71.09$\pm$07.60	& 72.06$\pm$05.92	& \bf{76.51$\pm$08.74}			\\
			5	& 70.51$\pm$06.74	& 63.77$\pm$05.81	& 69.13$\pm$07.59	& 69.25$\pm$04.18	& 67.59$\pm$06.43	& \bf{76.10$\pm$06.40}			\\
			6	& 60.91$\pm$04.53	& 56.44$\pm$05.79	& 61.03$\pm$05.25	& 65.64$\pm$06.08	& 63.45$\pm$06.24	& \bf{67.83$\pm$07.45}			\\
			7	& 60.88$\pm$06.43	& 54.69$\pm$05.57	& 57.35$\pm$05.68	& 65.02$\pm$04.32	& 61.12$\pm$06.32	& \bf{67.11$\pm$03.74}			\\
			8	& 60.58$\pm$06.55	& 49.88$\pm$03.76	& 55.72$\pm$03.92	& 63.94$\pm$03.71	& 60.13$\pm$05.82	& \bf{68.84$\pm$04.47}			\\
			9	& 59.04$\pm$04.61	& 46.18$\pm$02.82	& 52.57$\pm$05.44	& 60.18$\pm$05.00	& 59.20$\pm$06.61	& \bf{64.10$\pm$02.78}			\\
			10	& 56.56				& 45.95				& 45.20				& 52.54				& 54.74				& \bf{61.83}					\\	\hline
			Average	
			& 67.63				& 61.09				& 64.61				& 68.47				& 67.20				& \bf{72.67}					\\	\hline	
		\end{tabular}
	}
	\label{tab_per_semeion}
\end{table}

\begin{table}[!tb] 
	\centering
	\caption{Clustering Performance on JAFFE }
	\resizebox{0.48\textwidth}{!}{
		\begin{tabular}{|p{1cm}<{\centering} | p{2cm}<{\centering} | p{2cm}<{\centering} | p{2cm}<{\centering} | p{2cm}<{\centering} | p{2cm}<{\centering} || p{2cm}<{\centering} |}
			\hline
			\multirow{2}{1cm}{\centering N}& \multicolumn{6}{c|}{Accuracy (\%)}\\	
			\cline{2-7}
			\multirow{2}{1cm}{}
			& WNMF				& RMNMF					& CNMF				& KNMF 				& ONMF				& KLS-NMF						\\ \hline
			2	& 99.75$\pm$00.79	& \bf{100.0$\pm$00.00}	& 99.75$\pm$00.00	& 99.75$\pm$00.79	& 99.25$\pm$02.37	& \bf{100.0$\pm$00.00}			\\
			3	& 96.54$\pm$05.05	& 97.62$\pm$01.86		& 87.98$\pm$13.94	& 96.36$\pm$03.91	& 84.06$\pm$16.95	& \bf{98.72$\pm$01.47}	\\
			4	& 95.92$\pm$05.96	& 98.83$\pm$01.73		& 80.37$\pm$17.35	& 89.54$\pm$13.01	& 91.88$\pm$14.41	& \bf{99.07$\pm$02.04}	\\
			5	& 95.75$\pm$03.92	& 97.46$\pm$03.09		& 88.29$\pm$08.25	& 87.26$\pm$10.56	& 72.47$\pm$06.66	& \bf{98.39$\pm$02.23}	\\
			6	& 89.47$\pm$04.41	& 95.14$\pm$04.07		& 76.26$\pm$13.45	& 83.50$\pm$08.14	& 88.98$\pm$12.69	& \bf{97.80$\pm$01.14}	\\
			7	& 89.68$\pm$10.77	& 90.24$\pm$06.90		& 72.05$\pm$11.21	& 83.14$\pm$09.33	& 79.65$\pm$08.69	& \bf{96.79$\pm$02.35}	\\
			8	& 92.05$\pm$05.57	& 91.63$\pm$05.58		& 69.44$\pm$10.06	& 79.24$\pm$07.30	& 74.74$\pm$07.43	& \bf{96.52$\pm$01.61}	\\
			9	& 86.84$\pm$04.69	& 90.73$\pm$07.06		& 63.82$\pm$05.77	& 79.76$\pm$06.36	& 79.01$\pm$06.05	& \bf{95.51$\pm$01.23}	\\
			10	& 90.61				& 95.77					& 69.95				& 81.69				& 82.63				& \bf{96.24}				\\	\hline
			Average	
			& 92.96				& 95.27					& 78.66				& 86.69				& 83.63				& \bf{97.67}				\\	\hline
			\hline
			\multirow{2}{1cm}{\centering N}& \multicolumn{6}{c|}{NMI (\%)}\\	
			\cline{2-7}
			\multirow{2}{1cm}{}
			& WNMF				& RMNMF					& CNMF				& KNMF				& ONMF				& KLS-NMF						\\ \hline
			2	& 98.55$\pm$04.59	& \bf{100.0$\pm$00.00}	& 98.55$\pm$04.59	& 98.55$\pm$04.59	& 96.79$\pm$10.16	& \bf{100.0$\pm$00.00}			\\
			3	& 91.29$\pm$10.58	& 92.02$\pm$05.91		& 78.83$\pm$18.03	& 89.92$\pm$10.13	& 78.25$\pm$17.32	& \bf{95.84$\pm$04.63}	\\
			4	& 91.48$\pm$10.86	& 96.98$\pm$03.88		& 75.52$\pm$17.36	& 86.39$\pm$15.37	& 92.30$\pm$09.66	& \bf{97.82$\pm$04.70}	\\
			5	& 92.94$\pm$05.56	& 95.01$\pm$05.29		& 84.42$\pm$08.55	& 85.72$\pm$08.72	& 73.86$\pm$05.60	& \bf{96.69$\pm$04.49}	\\
			6	& 85.58$\pm$05.96	& 91.76$\pm$05.45		& 73.17$\pm$13.80	& 83.17$\pm$06.95	& 88.91$\pm$10.75	& \bf{95.68$\pm$02.05}	\\
			7	& 88.18$\pm$09.17	& 87.12$\pm$05.60		& 69.79$\pm$11.35	& 85.46$\pm$05.05	& 81.43$\pm$08.65	& \bf{94.79$\pm$03.58}	\\
			8	& 91.22$\pm$04.86	& 89.09$\pm$05.20		& 66.10$\pm$11.38	& 82.18$\pm$04.27	& 81.33$\pm$06.17	& \bf{94.50$\pm$02.53}	\\
			9	& 87.20$\pm$03.18	& 89.34$\pm$05.09		& 62.37$\pm$05.03	& 83.03$\pm$04.25	& 82.49$\pm$04.57	& \bf{93.73$\pm$01.57}	\\
			10	& 89.44				& 93.54					& 70.65				& 82.38				& 84.46				& \bf{94.40}				\\	\hline
			Average		& 90.65				& 92.76					& 75.49				& 86.31				& 84.42				& \bf{95.94}		\\	\hline
			\hline
			\multirow{2}{1cm}{\centering N}& \multicolumn{6}{c|}{Purity (\%)}\\	
			\cline{2-7}
			\multirow{2}{1cm}{}
			& WNMF				& RMNMF					& CNMF				& KNMF				& ONMF				& KLS-NMF						\\	\hline
			2	& 99.75$\pm$00.79	& \bf{100.0$\pm$00.00}	& 99.75$\pm$00.79	& 99.75$\pm$00.79	& 99.25$\pm$02.37	& \bf{100.0$\pm$00.00}			\\
			3	& 96.54$\pm$05.05	& 97.62$\pm$01.86		& 88.94$\pm$11.81	& 96.36$\pm$03.91	& 86.25$\pm$13.26	& \bf{98.72$\pm$01.47}	\\
			4	& 95.92$\pm$05.96	& 98.83$\pm$01.73		& 83.29$\pm$13.59	& 90.61$\pm$11.26	& 94.11$\pm$09.71	& \bf{99.09$\pm$02.04}	\\
			5	& 95.75$\pm$03.92	& 97.46$\pm$03.09		& 88.66$\pm$07.62	& 88.28$\pm$09.17	& 76.60$\pm$06.07	& \bf{98.39$\pm$02.23}	\\
			6	& 89.47$\pm$04.41	& 95.14$\pm$04.07		& 78.30$\pm$11.83	& 84.83$\pm$06.79	& 90.41$\pm$10.32	& \bf{97.80$\pm$01.14}	\\
			7	& 90.61$\pm$08.87	& 90.84$\pm$05.66		& 73.39$\pm$11.25	& 86.43$\pm$06.49	& 81.60$\pm$07.80	& \bf{96.79$\pm$02.35}	\\
			8	& 92.23$\pm$05.24	& 91.87$\pm$05.22		& 70.44$\pm$09.99	& 81.48$\pm$05.65	& 78.57$\pm$05.82	& \bf{96.52$\pm$01.61}	\\
			9	& 87.52$\pm$03.63	& 91.15$\pm$06.22		& 66.02$\pm$05.31	& 82.31$\pm$04.65	& 81.57$\pm$04.89	& \bf{95.51$\pm$01.23}	\\
			10	& 90.61				& 95.77					& 74.18				& 82.16				& 82.36				& \bf{96.24}				\\	\hline
			Average		
			& 93.16				& 95.41					& 80.33				& 88.02				& 85.66				& \bf{97.67}				\\	\hline
		\end{tabular}
	}
	\label{tab_per_jaffe}
\end{table}


\subsection{Evaluation Metrics}
Three evaluation metrics are used in our experiment. The first metric is accuracy, ranging from 0 to 1. It measures the extent to which each cluster contains data points from the same class. The second metric, normalized mutual information (NMI), measures the quality of the clusters. The third metric, purity, measures the extent to which each cluster contains samples from primarily the same class. More details can be found in \cite{huang2014robust}.

\subsection{Benchmark Data Sets}
\label{sec_exp_data}

Five benchmark data sets are used in our experiments, including PIX, JAFFE, Alphadigit, Semeion, and Faces94. We briefly describe these data sets as follows: 
\begin{itemize}
\item PIX \cite{hond1997distinctive} contains 100 gray scale images collected from 10 objects, which has size 100$\times$100 pixes. 
\item JAFFE \cite{lyons1998japanese} collects 213 images of 10 Japanese female models posed 7 facial expressions. These images are rated on 6 motion adjectives by 60 Japanese subjects. 
\item Alphadigit is a binary data set, which collects handwritten digits 0-9 and letters A-Z. Totally, there are 36 classes and 39 samples for each class. 
\item Semeion collects 1,593 handwritten digits that are written by around 80 persons. These images were scanned and stretched into size 16 $\times$16. 
\item Faces94 contains images of 153 individuals, each of whom has 20 images of size 200$\times$180. 
\end{itemize}

\begin{table}[!tb] 
	\centering
	\caption{Clustering Performance on PIX }
	\resizebox{0.48\textwidth}{!}{
		\begin{tabular}{|p{1cm}<{\centering} | p{2cm}<{\centering} | p{2cm}<{\centering} | p{2cm}<{\centering} | p{2cm}<{\centering} | p{2cm}<{\centering} || p{2cm}<{\centering} |}
			\hline
			\multirow{2}{1cm}{\centering N}& \multicolumn{6}{c|}{Accuracy (\%)}\\	
			\cline{2-7}
			\multirow{2}{1cm}{}
			& WNMF							& RMNMF					& CNMF							& KNMF				& ONMF				& KLS-NMF				\\\hline
			2	& 94.00$\pm$10.22				& \bf{96.50$\pm$07.84}	& 96.50$\pm$06.26	& 94.50$\pm$10.39	& 89.00$\pm$12.20	& 94.50$\pm$10.39		\\
			3	& 96.00$\pm$05.84				& \bf{97.33$\pm$03.06}	& 96.33$\pm$04.83				& 96.00$\pm$05.84	& 82.67$\pm$21.36	& 96.00$\pm$05.84		\\
			4	& 92.75$\pm$07.77				& 96.50$\pm$04.44		& 88.00$\pm$12.68				& 89.75$\pm$13.36	& 83.25$\pm$14.24	& \bf{97.25$\pm$03.81}	\\
			5	& 86.40$\pm$12.75				& \bf{90.80$\pm$07.50}	& 82.20$\pm$09.21				& 86.00$\pm$09.57	& 82.80$\pm$09.10	& 88.60$\pm$11.16		\\
			6	& 85.00$\pm$11.63				& 89.00$\pm$08.72		& 77.50$\pm$09.24				& 86.33$\pm$09.84	& 78.50$\pm$10.93	& \bf{90.17$\pm$09.51}	\\
			7	& 86.43$\pm$08.97				& 87.14$\pm$07.85		& 81.57$\pm$08.48				& 89.29$\pm$06.50	& 79.14$\pm$08.52	& \bf{92.00$\pm$06.32}	\\
			8	& 80.88$\pm$04.04				& 82.37$\pm$05.38		& 78.50$\pm$04.56				& 83.25$\pm$08.60	& 81.25$\pm$06.85	& \bf{91.00$\pm$01.84}	\\
			9	& 88.22$\pm$05.06				& 87.00$\pm$06.83		& 73.89$\pm$04.39				& 82.78$\pm$03.93	& 79.33$\pm$07.84	& \bf{91.00$\pm$04.81}	\\
			10	& 74.00							& 81.00					& 80.00							& 69.00				& \bf{89.00}		& \bf{89.00}			\\	\hline
			Average	
			& 87.08							& 89.74					& 83.83							& 86.32				& 82.77				& \bf{92.17}			\\	\hline
			\hline
			\multirow{2}{1cm}{\centering N}& \multicolumn{6}{c|}{NMI (\%)}\\	
			\cline{2-7}
			\multirow{2}{1cm}{}
			& WNMF				& RMNMF					& CNMF				& KNMF				& ONMF				& KLS-NMF				\\	\hline
			2	& 81.39$\pm$28.27	& \bf{88.28$\pm$22.45}	& 87.45$\pm$21.45	& 83.81$\pm$28.77	& 67.62$\pm$32.75	& 83.81$\pm$28.77		\\
			3	& 89.87$\pm$11.64	& \bf{92.32$\pm$08.}18	& 91.00$\pm$08.50	& 89.87$\pm$11.64	& 79.34$\pm$16.44	& 89.87$\pm$11.64		\\
			4	& 89.41$\pm$09.06	& 93.45$\pm$07.08		& 84.84$\pm$11.15	& 88.21$\pm$11.69	& 82.36$\pm$11.02	& \bf{94.67$\pm$05.78}	\\
			5	& 87.90$\pm$09.29	& 88.04$\pm$07.35		& 79.03$\pm$09.09	& 83.93$\pm$07.46	& 84.46$\pm$04.78	& \bf{88.84$\pm$07.68}	\\
			6	& 86.02$\pm$08.30	& 87.05$\pm$07.23		& 75.43$\pm$08.46	& 87.75$\pm$06.03	& 81.94$\pm$07.25	& \bf{89.98$\pm$06.53}	\\
			7	& 88.64$\pm$04.80	& 87.06$\pm$07.05		& 82.52$\pm$05.68	& 88.39$\pm$05.28	& 83.33$\pm$05.91	& \bf{91.43$\pm$05.16}	\\
			8	& 85.16$\pm$02.09	& 83.54$\pm$04.15		& 80.95$\pm$03.37	& 87.80$\pm$04.71	& 84.36$\pm$05.58	& \bf{90.18$\pm$02.26}	\\
			9	& 89.22$\pm$01.67	& 87.89$\pm$04.59		& 78.50$\pm$03.96	& 85.59$\pm$01.77	& 84.60$\pm$04.81	& \bf{91.37$\pm$04.05}	\\
			10	& 83.91				& 86.02					& 82.97				& 80.90				& 89.31				& \bf{89.35}			\\	\hline
			Average	
			& 86.84				& 88.18					& 82.52				& 86.25				& 81.92				& \bf{89.94}			\\	\hline
			\hline
			\multirow{2}{1cm}{\centering N}& \multicolumn{6}{c|}{Purity (\%)}\\	
			\cline{2-7}
			\multirow{2}{1cm}{}
			& WNMF				& RMNMF					& CNMF					& KNMF				& ONMF					& KLS-NMF						\\	\hline
			2	& 94.00$\pm$10.22	& \bf{96.50$\pm$07.84}	& \bf{96.50$\pm$06.26}	& 94.50$\pm$10.39	& 89.00$\pm$12.20		& 94.50$\pm$10.39				\\
			3	& 96.00$\pm$05.84	& \bf{97.33$\pm$03.06}	& 96.33$\pm$04.83		& 96.00$\pm$05.84	& 87.00$\pm$14.44		& 96.00$\pm$05.84				\\
			4	& 92.75$\pm$07.77	& 96.50$\pm$04.44		& 89.00$\pm$10.94		& 91.50$\pm$09.87	& 86.00$\pm$10.62		& \bf{97.25$\pm$03.81}	\\
			5	& 89.20$\pm$08.75	& \bf{91.40$\pm$06.11}	& 82.80$\pm$08.70		& 86.80$\pm$07.44	& 85.40$\pm$06.11		& 90.60$\pm$07.43				\\
			6	& 87.33$\pm$08.90	& {89.67$\pm$07.06}		& 79.00$\pm$08.47		& 88.50$\pm$07.00	& 82.17$\pm$08.32		& \bf{91.50$\pm$07.00}			\\
			7	& 88.71$\pm$06.19	& 88.57$\pm$06.02		& 83.71$\pm$06.50		& 89.71$\pm$05.34	& 82.14$\pm$06.50		& \bf{92.57$\pm$05.25}			\\
			8	& 83.88$\pm$02.66	& 85.00$\pm$03.82		& 81.12$\pm$03.30		& 86.25$\pm$06.01	& 82.87$\pm$06.18		& \bf{91.00$\pm$01.84}			\\
			9	& 89.44$\pm$03.24	& 88.22$\pm$05.14		& 77.22$\pm$04.57		& 84.78$\pm$02.03	& 82.22$\pm$05.93		& \bf{91.33$\pm$04.31}			\\
			10	& 79.00				& 85.00					& 83.00					& 74.00				& \underline{89.00}		& \bf{89.00}				\\	\hline
			Average	
			& 88.92				& 90.91					& 85.41					& 88.00				& 85.09					& \bf{92.64}				\\	\hline
		\end{tabular}
	}
	\label{tab_per_pix10}
\end{table}

\begin{table}[!tb] 
	\centering
	\caption{Clustering Performance on Alphadigit }
	\resizebox{0.48\textwidth}{!}{
		\begin{tabular}{|p{1cm}<{\centering} | p{2cm}<{\centering} | p{2cm}<{\centering} | p{2cm}<{\centering} | p{2cm}<{\centering} | p{2cm}<{\centering} || p{2cm}<{\centering} |}
			\hline
			\multirow{2}{1cm}{\centering N}& \multicolumn{6}{c|}{Accuracy (\%)}\\	
			\cline{2-7}
			\multirow{2}{1cm}{}
			& WNMF 				& RMNMF				& CNMF				& KNMF				& ONMF				& KLS-NMF						\\	\hline
			5	& 70.72$\pm$09.62	& 73.13$\pm$09.71	& 73.54$\pm$09.91	& 73.79$\pm$09.37	& 70.82$\pm$12.35	& \bf{81.38$\pm$12.89}			\\
			10	& 56.08$\pm$06.26	& 54.23$\pm$05.15	& 56.18$\pm$04.38	& 63.49$\pm$06.67	& 60.69$\pm$07.56	& \bf{65.46$\pm$07.03}			\\
			15	& 47.62$\pm$03.20	& 44.48$\pm$04.14	& 46.70$\pm$03.83	& 54.41$\pm$02.82	& 49.86$\pm$04.09	& \bf{54.50$\pm$03.89}	\\
			20	& 45.55$\pm$01.87	& 40.36$\pm$02.86	& 40.21$\pm$02.37	& 51.18$\pm$03.77	& 48.38$\pm$03.51	& \bf{52.88$\pm$03.56}			\\
			25	& 43.67$\pm$01.94	& 33.84$\pm$02.67	& 31.04$\pm$01.82	& 45.12$\pm$02.37	& 42.07$\pm$01.70	& \bf{48.61$\pm$02.94}	\\
			30	& 39.38$\pm$01.76	& 31.65$\pm$02.26	& 28.50$\pm$01.36	& 41.30$\pm$02.26	& 40.53$\pm$02.32	& \bf{45.88$\pm$02.84}	\\
			35	& 37.96$\pm$01.51	& 28.37$\pm$01.68	& 23.63$\pm$00.85	& 39.93$\pm$01.15	& 38.22$\pm$02.10	& \bf{44.29$\pm$01.60}	\\
			36	& 36.67				& 27.22				& 27.75				& 41.45				& 34.97				& \bf{41.74}				\\	\hline
			Average	
			& 47.21				& 41.66				& 40.94				& 51.33				& 48.19				& \bf{54.34}				\\	\hline
			\hline
			\multirow{2}{1cm}{\centering N}& \multicolumn{6}{c|}{NMI (\%)}\\	
			\cline{2-7}
			\multirow{2}{1cm}{}
			& WNMF 				& RMNMF				& CNMF				& KNMF					& ONMF				& KLS-NMF						\\	\hline
			5	& 58.21$\pm$07.83	& 59.08$\pm$10.15	& 61.31$\pm$09.11	& 63.88$\pm$10.05		& 59.50$\pm$10.57	& \bf{69.82$\pm$11.61}			\\
			10	& 55.20$\pm$04.62	& 53.91$\pm$04.14	& 56.99$\pm$03.24	& 62.16$\pm$04.01		& 60.47$\pm$04.68	& \bf{64.13$\pm$04.87}			\\
			15	& 52.91$\pm$04.18	& 48.40$\pm$03.05	& 51.73$\pm$03.41	& \bf{58.49$\pm$03.16}	& 55.36$\pm$02.74	& \bf{58.42$\pm$02.02}	\\
			20	& 54.20$\pm$01.49	& 47.49$\pm$02.55	& 49.91$\pm$01.95	& 58.35$\pm$02.46		& 55.91$\pm$02.43	& \bf{61.55$\pm$02.44}			\\
			25	& 54.47$\pm$02.67	& 44.62$\pm$01.89	& 43.32$\pm$01.77	& 55.94$\pm$01.15		& 53.96$\pm$01.91	& \bf{59.78$\pm$01.84}			\\
			30	& 53.34$\pm$01.13	& 44.62$\pm$02.23	& 42.26$\pm$01.39	& 54.24$\pm$01.52		& 53.73$\pm$01.73	& \bf{58.65$\pm$01.49}			\\
			35	& 54.03$\pm$01.10	& 43.43$\pm$01.41	& 36.61$\pm$01.02	& 54.42$\pm$00.98		& 53.54$\pm$00.77	& \bf{58.81$\pm$00.94}			\\
			36	& 53.48				& 43.03				& 39.61				& 56.21					& 52.56				& \bf{56.64}					\\	\hline
			Average	
			& 54.48				& 48.07				& 47.72				& 57.96					& 55.64				& \bf{60.98}					\\	\hline
			\hline
			\multirow{2}{1cm}{\centering N}& \multicolumn{6}{c|}{Purity (\%)}\\	
			\cline{2-7}
			\multirow{2}{1cm}{}
			& WNMF 				& RMNMF				& CNMF				& KNMF					& ONMF				& KLS-NMF						\\	\hline
			5	& 72.97$\pm$07.60	& 74.72$\pm$08.01	& 75.18$\pm$08.29	& 75.85$\pm$07.86		& 72.92$\pm$10.62	& \bf{82.36$\pm$10.84}			\\
			10	& 58.28$\pm$05.78	& 57.54$\pm$05.34	& 59.92$\pm$04.08	& 66.62$\pm$05.80		& 63.59$\pm$07.16	& \bf{67.87$\pm$06.47}			\\
			15	& 51.03$\pm$03.18	& 46.99$\pm$03.37	& 49.86$\pm$03.78	& \bf{59.70$\pm$03.07}	& 53.09$\pm$03.36	& 56.91$\pm$03.27				\\
			20	& 48.85$\pm$02.35	& 43.18$\pm$02.98	& 43.42$\pm$01.98	& 54.74$\pm$02.88		& 51.42$\pm$03.20	& \bf{56.29$\pm$03.48}			\\
			25	& 46.27$\pm$02.18	& 36.19$\pm$02.72	& 33.82$\pm$01.95	& 49.26$\pm$02.15		& 45.14$\pm$01.74	& \bf{51.63$\pm$02.80}	\\
			30	& 42.53$\pm$01.64	& 33.68$\pm$02.50	& 30.46$\pm$01.27	& 45.48$\pm$01.87		& 43.63$\pm$02.01	& \bf{49.52$\pm$02.78}			\\
			35	& 41.00$\pm$01.32	& 30.13$\pm$01.42	& 25.46$\pm$00.94	& 43.49$\pm$00.88		& 41.17$\pm$01.58	& \bf{47.41$\pm$01.}54	\\
			36	& 39.51				& 29.62				& 29.26				& \bf{45.37}			& 38.68				& 44.73							\\	\hline
			Average	
			& 50.05				& 44.01				& 43.42				& 54.84					& 51.21				& \bf{60.98}					\\	\hline
		\end{tabular}
	}
	\label{tab_per_alphadigit}
\end{table}

\subsection{Algorithms in Comparison}

To illustrate the effectiveness of KLS-NMF, we compare them with several state-of-the-art NMF methods, including weighted NMF (WNMF) \cite{kim2009weighted}, ONMF \cite{ding2006orthogonal}, CNMF \cite{ding2010convex}, Kernel NMF (KNMF) \cite{ding2010convex}, and RMNMF \cite{huang2014robust}. We briefly describe these methods as follows: 
\begin{itemize}
\item \textbf{WNMF}. It extends the results of the original NMF to a weighted case.
\item \textbf{ONMF}. It has different variants that imposes orthogonality constraint on different factor matrices. In our experiment, we adopt the matrix tri-factorization model that imposes orthogonal constraints on the left and right factor matrices. 
\item \textbf{CNMF}. It restricts the learned basis to lie within the column space of the input data, such that the basis vectors can be represented as a convex combination of the inputs. 
\item \textbf{KNMF}. Based on CNMF, KNMF exploits latent nonlinear structures of the data in kernel space. In our experiment, we use rbf kernel with radius ranging in the set $\mathcal{S} = \{0.001,0.01,0.1,1,10,100,1000\}$. 
\item \textbf{RMNMF}. It relaxes the data and basis matrix to have mixed signs, and adopts robust $\ell_{2,1}$ norm to measure the fitting errors. Moreover, nonlinear structures of the data are exploited on manifold. We use the binary weighting to construct the graph Laplacian, with the default 5 neighbors selected. The regularization parameter is also selected from $\mathcal{S}$. 
\item \textbf{KLS-NMF}. To be consistent with KNMF, we use rbf kernel with the same range and radius in $\mathcal{S}$. Moreover, we select the parameter $\lambda$ from the set $\mathcal{S}$. 
\end{itemize}

\begin{figure}[!tb]
	\centering
\includegraphics[width=0.95\columnwidth]{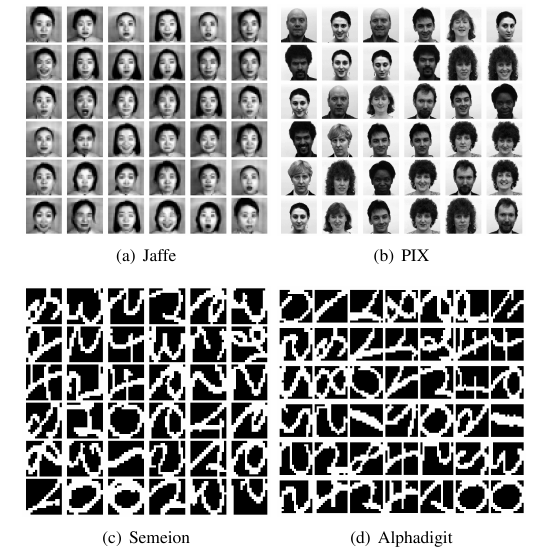}
	\caption{ Examples selected images from Jaffe, PIX, Semeion, and Alphadigit data sets. }
	\label{fig_images}
\end{figure}

\subsection{Clustering Performance}
\label{sec_exp_detail}
In this subsection, we evaluate the algorithms in comparison by conducting experiments on PIX, Jaffe, Semeion, and Alphadigit data sets. For purpose of illustration, we visually show some examples of these data sets in \cref{fig_images}. For a given data set, we denote the total number of clusters by $\bar{N}$, e.g., $\bar{N}=36$ in Alphadigit data. To better investigate the clustering performance of different methods on this data, we randomly select subsets of this data to conduct more detailed experiments. In particular, we randomly select a subset with $N$ out of $\bar{N}$ classes to conduct experiments. It is noted that for a specific $N$ value, there are $\bar{N}!/(\bar{N}-N)!N!$ different combinations of classes, i.e., subsets, from which we randomly chose 10. Experiments are conducted on the selected 10 subsets and the best average performance is reported by enumerating all possible combinations of parameters. This strategy applies to each data set and each algorithm. We test different $N$ values such that subsets of different sizes are tested for better comparison. We present the experimental results in \cref{tab_per_alphadigit,tab_per_jaffe,tab_per_pix10,tab_per_semeion}, with $N$ values being used in our experiments. In each table, three subtables are given corresponding to three evaluating measures, respectively. The best performance is bold-faced.

From \cref{tab_per_alphadigit,tab_per_jaffe,tab_per_pix10,tab_per_semeion}, it is observed that the proposed model has the best performance among all algorithms in comparison. In particular, KLS-NMF achieves the best performance in almost all cases and the improvements over other methods are significant. For example, on JAFFE data with large $N$ value, KLS-NMF can improve the performance by at about 8\% in all three measures. Generally, the proposed method can improve the average performance by around 3-6\% compared with the best competing method. It is noted that the best among compared methods varies depending on data, whereas the proposed method shows stability on all data sets. For example, RMNMF has some of the best results on PIX, but its performance on other data sets are less competitive. Moreover, the improvements of KLS-NMF over competing methods suggests that learning local similarity indeed provides advantages in clustering.

\subsection{Clustering Performance on Larger Data}
In the above subsection, we have evaluated the proposed method on some widely used benchmark data sets. Among them, 3 out of 4 data sets have up to 10 classes included in the experiment. In this subsection, we aim at testifying the capability of our method in handling larger data. To further testify how the proposed method performs on larger data, we conduct experiments on Faces94 data set. We use images of the males, where images of up to 113 inviduals are used in the experiment, which is fairly large for this test. For purpose of clearer illustration, we show some examples of this data in \cref{fig_faces94}. We follow the same settings as in above subsection and report the results in \cref{tab_per_faces94}. It is observed that the proposed method outperforms state-of-the-art algorithms in comparison with significant improvement. This observation, again, ensures the effectiveness of the proposed method and implies its potential to be used in real world applications.

\begin{table}[!tb] 
	\centering
	\caption{Clustering Performance on Faces94 }
	\resizebox{0.48\textwidth}{!}{
		\begin{tabular}{|p{1cm}<{\centering} | p{2cm}<{\centering} | p{2cm}<{\centering} | p{2cm}<{\centering} | p{2cm}<{\centering} | p{2cm}<{\centering} || p{2cm}<{\centering} |}
			\hline
			\multirow{2}{1cm}{\centering N}& \multicolumn{6}{c|}{Accuracy (\%)}\\	
			\cline{2-7}
			\multirow{2}{1cm}{}
			& WNMF					& RMNMF				& CNMF				& KNMF 				& ONMF				& KLS-NMF						\\ \hline
			10	& 90.85$\pm$07.15 	& 86.80$\pm$08.89	& 82.25$\pm$06.77	& 87.85$\pm$07.77	& 79.75$\pm$09.66	& \textbf{100.0$\pm$00.00}			\\
			20	& 83.98$\pm$03.24 	& 81.47$\pm$03.13	& 79.78$\pm$05.39	& 78.90$\pm$04.99	& 76.00$\pm$03.42	& \textbf{88.89$\pm$03.47}			\\
			30	& 80.25$\pm$05.28 	& 82.08$\pm$05.59	& 75.78$\pm$04.21	& 72.15$\pm$04.50	& 72.12$\pm$03.34	& \textbf{83.97$\pm$04.12}			\\
			40	& 76.48$\pm$02.14 	& 76.40$\pm$02.98	& 72.59$\pm$04.58	& 70.36$\pm$04.68	& 69.74$\pm$02.59	& \textbf{83.47$\pm$02.62}			\\
			50	& 76.74$\pm$02.73 	& 76.37$\pm$02.75	& 72.45$\pm$02.36	& 68.24$\pm$03.46	& 67.27$\pm$02.99	& \textbf{82.13$\pm$03.12}			\\
			60	& 73.55$\pm$02.91 	& 77.91$\pm$03.66	& 71.16$\pm$02.72	& 66.14$\pm$02.44	& 68.11$\pm$03.29	& \textbf{81.72$\pm$03.25}			\\
			70	& 73.64$\pm$02.37 	& 76.11$\pm$04.38	& 72.30$\pm$03.57	& 67.26$\pm$03.22	& 69.01$\pm$02.66	& \textbf{80.14$\pm$03.33}			\\
			80	& 74.29$\pm$02.42 	& 77.83$\pm$02.20	& 69.42$\pm$02.99	& 65.70$\pm$02.32	& 68.19$\pm$03.43	& \textbf{78.45$\pm$03.26}			\\
			90	& 72.13$\pm$03.38 	& 77.24$\pm$01.96	& 67.92$\pm$02.47	& 63.44$\pm$01.78	& 68.26$\pm$03.79	& \textbf{79.49$\pm$02.73}			\\
			100	& 72.28$\pm$02.74 	& 77.41$\pm$02.80	& 67.09$\pm$02.63	& 65.20$\pm$01.74	& 69.07$\pm$02.75	& \textbf{79.72$\pm$01.83}			\\
			110	& 70.55$\pm$02.30 	& 76.38$\pm$02.61	& 66.85$\pm$02.76	& 63.90$\pm$02.41	& 70.42$\pm$02.06	& \textbf{77.82$\pm$02.85}			\\
			113	& 73.98 			& 74.42				& 66.55				& 65.93				& 68.50				& \textbf{77.26}			\\	\hline
			Average	
			& 76.56				& 78.37				& 76.35				& 69.59				& 70.54				& \textbf{82.76}				\\	\hline
			\hline
			\multirow{2}{1cm}{\centering N}& \multicolumn{6}{c|}{NMI (\%)}\\	
			\cline{2-7}
			\multirow{2}{1cm}{}
			& WNMF				& RMNMF					& CNMF				& KNMF				& ONMF				& KLS-NMF						\\ \hline
			10	& 95.22$\pm$03.71 	& 90.03$\pm$06.61	& 90.87$\pm$03.68	& 94.00$\pm$03.75	& 89.85$\pm$04.90	& \textbf{100.0$\pm$00.00}			\\
			20	& 92.43$\pm$01.47 	& 89.61$\pm$02.18	& 89.49$\pm$02.06	& 89.84$\pm$02.21	& 88.35$\pm$02.48	& \textbf{95.72$\pm$01.19}			\\
			30	& 91.98$\pm$02.46 	& 90.13$\pm$03.69	& 88.46$\pm$01.88	& 87.44$\pm$01.80	& 88.20$\pm$01.68	& \textbf{94.41$\pm$01.41}			\\
			40	& 91.11$\pm$01.71 	& 88.78$\pm$02.43	& 86.48$\pm$02.33	& 85.84$\pm$03.52	& 87.12$\pm$01.34	& \textbf{94.86$\pm$00.74}			\\
			50	& 91.42$\pm$01.22 	& 89.27$\pm$02.54	& 86.79$\pm$01.67	& 85.52$\pm$01.94	& 85.83$\pm$01.44	& \textbf{94.20$\pm$00.88}			\\
			60	& 90.34$\pm$01.67 	& 89.57$\pm$02.66	& 86.70$\pm$01.26	& 84.55$\pm$01.72	& 87.37$\pm$01.93	& \textbf{94.30$\pm$01.07}			\\
			70	& 90.95$\pm$00.98 	& 91.33$\pm$01.65	& 87.74$\pm$01.76	& 85.56$\pm$02.52	& 88.69$\pm$01.32	& \textbf{94.11$\pm$01.08}			\\
			80	& 91.42$\pm$01.22 	& 91.40$\pm$01.49	& 86.26$\pm$01.44	& 84.41$\pm$01.98	& 88.83$\pm$01.48	& \textbf{93.92$\pm$01.16}			\\
			90	& 90.25$\pm$01.40 	& 91.77$\pm$01.73	& 85.63$\pm$00.91	& 82.87$\pm$01.59	& 88.19$\pm$01.77	& \textbf{94.11$\pm$00.97}			\\
			100	& 90.69$\pm$01.13 	& 91.83$\pm$01.12	& 85.39$\pm$01.00	& 84.34$\pm$01.94	& 88.90$\pm$01.06	& \textbf{94.20$\pm$00.51}			\\
			110	& 89.87$\pm$01.28 	& 91.36$\pm$01.01	& 85.37$\pm$01.16	& 83.58$\pm$01.82	& 89.40$\pm$00.88	& \textbf{93.91$\pm$00.88}			\\
			113	& 90.69 			& 91.67				& 85.43				& 83.74				& 87.85				& \textbf{94.01}			\\	\hline
			Average	
			& 91.36				& 90.56				& 87.05				& 85.97				& 88.21				& \textbf{94.81}				\\	\hline
			\hline
			\multirow{2}{1cm}{\centering N}& \multicolumn{6}{c|}{Purity (\%)}\\	
			\cline{2-7}
			\multirow{2}{1cm}{}
			& WNMF				& RMNMF					& CNMF				& KNMF				& ONMF				& KLS-NMF						\\	\hline
			10	& 93.05$\pm$05.25 	& 87.95$\pm$08.06	& 86.45$\pm$05.13	& 90.30$\pm$06.00	& 84.85$\pm$07.22	& \textbf{100.0$\pm$00.00}			\\
			20	& 87.50$\pm$02.01 	& 83.85$\pm$02.76	& 83.52$\pm$03.67	& 83.03$\pm$03.76	& 80.18$\pm$02.91	& \textbf{88.90$\pm$03.47}			\\
			30	& 84.95$\pm$03.57 	& 84.68$\pm$04.69	& 80.05$\pm$03.16	& 78.22$\pm$03.31	& 77.30$\pm$02.44	& \textbf{83.97$\pm$04.12}			\\
			40	& 82.20$\pm$01.94 	& 80.53$\pm$02.48	& 77.41$\pm$03.62	& 76.14$\pm$03.33	& 75.34$\pm$01.88	& \textbf{83.47$\pm$02.62}			\\
			50	& 82.00$\pm$02.15 	& 80.78$\pm$02.53	& 76.87$\pm$02.41	& 74.51$\pm$02.63	& 73.32$\pm$02.28	& \textbf{82.13$\pm$03.12}			\\
			60	& 79.53$\pm$02.44 	& 81.88$\pm$03.11	& 75.62$\pm$02.12	& 72.68$\pm$02.02	& 74.72$\pm$02.33	& \textbf{81.72$\pm$03.25}			\\
			70	& 79.94$\pm$01.98 	& 80.83$\pm$03.54	& 76.40$\pm$03.02	& 73.87$\pm$02.48	& 74.99$\pm$02.21	& \textbf{80.14$\pm$03.33}			\\
			80	& 80.31$\pm$01.73 	& 82.24$\pm$01.88	& 73.82$\pm$02.42	& 72.26$\pm$01.76	& 74.62$\pm$02.70	& \textbf{78.45$\pm$03.26}			\\
			90	& 78.94$\pm$02.60 	& 82.17$\pm$01.69	& 72.32$\pm$01.84	& 70.96$\pm$01.68	& 73.99$\pm$03.03	& \textbf{79.49$\pm$02.73}			\\
			100	& 78.55$\pm$01.94 	& 82.00$\pm$02.26	& 71.20$\pm$02.18	& 72.03$\pm$01.54	& 75.07$\pm$02.48	& \textbf{79.72$\pm$01.83}			\\
			110	& 77.50$\pm$01.75 	& 81.00$\pm$02.17	& 71.19$\pm$02.10	& 70.78$\pm$01.97	& 75.94$\pm$01.73	& \textbf{83.15$\pm$01.97}			\\
			113	& 79.60			 	& 79.73				& 71.42				& 72.48				& 73.05				& \textbf{82.74}			\\	\hline
			Average	
			& 82.01					& 82.30				& 76.35				& 75.60				& 76.11				& \textbf{83.66}				\\	\hline
		\end{tabular}
	}
	\label{tab_per_faces94}
\end{table}

\begin{figure}[!tb]
\centering
\includegraphics[width=1.0\columnwidth]{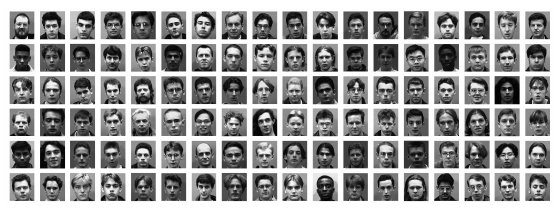}
\caption{ Examples selected images from Faces94 data. }
\label{fig_faces94}
\end{figure}

\subsection{Convergence}
In \cref{sec_proof}, we have provided theoretical analysis on the convergence of the proposed optimization strategy. To experimentally verify this, in this subsection, we will show some empirical examples. On Yale, PIX, Alphadigit, and Semeion data, we randomly choose 10 subsets. Without loss of generality, we use the 10 subsets with the smallest $N$ values as used in \cref{tab_per_alphadigit,tab_per_pix10,tab_per_semeion,tab_per_jaffe}. For all these subsets, we fix the parameter $\lambda = 0.001$ and set 1 for the radius of rbf kernel. 

In \cref{fig_W_kls}, we show how the difference of two consecutive $W_t$'s changes with respect to iteration number $t$ on the above selected subsets. Similarly, we show the distance sequence of two consecutive $G_t$'s in \cref{fig_G_kls}. It is seen that both $\{W_t\}$ and $\{G_t\}$ sequences can converge within a small number of iterations, which verifies the effectiveness and correctness of the optimization scheme.

\begin{figure}[!tb]
\centering
\includegraphics[width=0.48\textwidth]{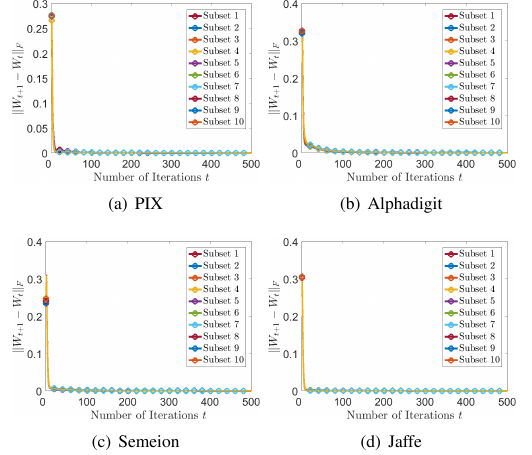}

%
\caption{ Example of the difference between consecutive $W_t$'s by KLS-NMF on PIX, Alphadigit, Semeion, and Jaffe. }
\label{fig_W_kls}
\end{figure}

\begin{figure}[!tb]
\centering
\includegraphics[width=0.48\textwidth]{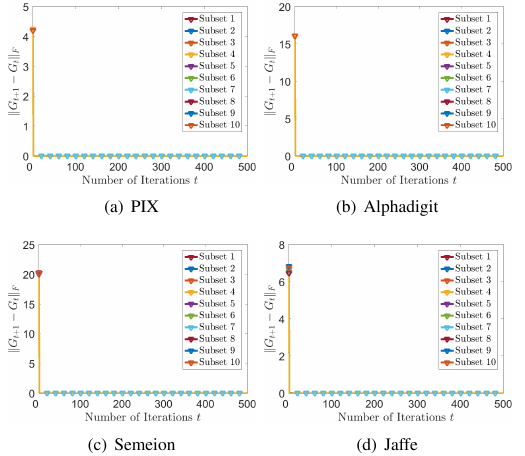}

%
\caption{ Example of the difference between consecutive $G_t$'s by KLS-NMF on PIX, Alphadigit, Semeion, and Jaffe. }
\label{fig_G_kls}
\end{figure}

Moreover, to further experimentally verify the convergence of objective value, we show some results in \cref{fig_fval_kls}. It is seen that the objective function indeed decreases its value with the updating rules on all these subsets. It is observed that the objective value sequences tend to converge within about 100 iterations, which verifies the fast convergence and effectiveness of the proposed method. In addition to the theoretical guarantees, these empirical observations indeed further strengthen the applicability of our method in real world problems. 

\begin{figure}[!tb]
\centering

\includegraphics[width=0.48\textwidth]{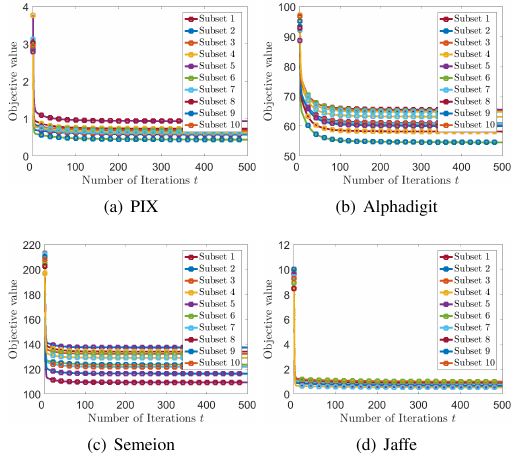}
%
%
\caption{ Example of objective value sequences by KLS-NMF on PIX, Alphadigit, Semeion, and Jaffe. }
\label{fig_fval_kls}
\end{figure}

\begin{figure}[!tb]
\centering
\includegraphics[width=0.48\textwidth]{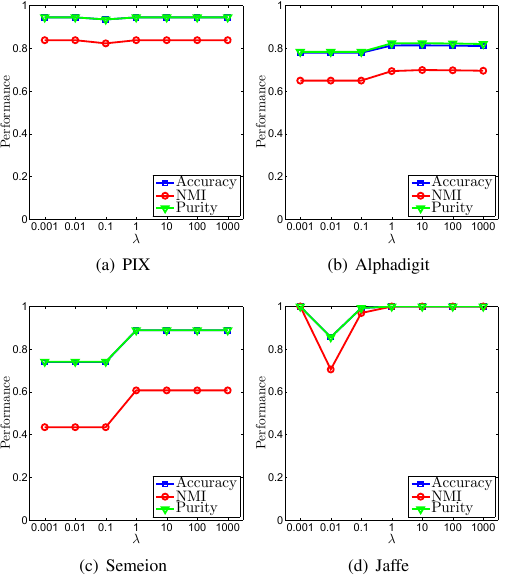}
%
%
%
\caption{ Performance variations of KLS-NMF in accuracy, NMI, and purity with respect to different $\lambda$ values on PIX, Alphadigit, Semeion, and Jaffe. }
\label{fig_para_kls}
\end{figure}

\subsection{Parameter Sensitivity}
%
For unsupervised learning, how to determine optimal parameters is still an open problem and yet to be exploited in further research. In this subsection, we test KLS-NMF with different $\lambda$ values and show how it affects the final clustering performance. Without loss of generality, we use the 10 subsets with the smallest $N$ values as used in \cref{tab_per_alphadigit,tab_per_pix10,tab_per_semeion,tab_per_jaffe}. We plot the performance versus $\lambda$ with the best kernel used in the experiment. It is observed that KLS-NMF is quite insensitive to variation of parameters and promising performance can be obtained with a wide range of parameter variation. This insensitivity to parameter variation may reduce parameter tuning effort, affording ease of use of our models in practice.

\section{Conclusion}
\label{sec_conclusion}
This paper proposes a novel NMF method, which simultaneously exploits global and local structures of the data to construct basis vectors and coefficient matrix. The learned basis and coefficients well preserve intrinsic geometrical structures of the data and thus are more representative. An orthogonality constraint enforced on the coefficient and the embedding of local similarity learning mutually ensure the uniqueness of the factorization and provide an immediate and improved clustering interpretation. Nonlinear variant is developed and efficient multiplicative update rules are derived with theoretical convergence guarantee. Extensive experimental results have verified the effectiveness of the proposed method.

\section*{Acknowledgment}

This work is supported by National Natural Foundation of China (NSFC) under Grants 61806106, 61802215, and 61806045, and Shandong Provincial Natural Science Foundation, China under Grants ZR2019QF009, ZR2019BF028, and ZR2019BF011.


%


\ifCLASSOPTIONcaptionsoff
  \newpage
\fi

%

\bibliographystyle{IEEEtrans}
\bibliography{LocalNMF_TCYB_v1}

\vfill

\end{document}